\title{Max-Min Grouped Bandits}
\author{
    Zhenlin Wang,\textsuperscript{\rm 1}
    Jonathan Scarlett\textsuperscript{\rm 1,2}\\
}
\begin{document}
\maketitle
\begin{abstract}
	In this paper, we introduce a multi-armed bandit problem termed max-min grouped bandits, in which the arms are arranged in possibly-overlapping groups, and the goal is to find the group whose worst arm has the highest mean reward.  This problem is of interest in applications such as recommendation systems and resource allocation, and is also closely related to widely-studied robust optimization problems.  We present two algorithms based successive elimination and robust optimization, and derive upper bounds on the number of samples to guarantee finding a max-min optimal or near-optimal group, as well as an algorithm-independent lower bound.  We discuss the degree of tightness of our bounds in various cases of interest, and the difficulties in deriving uniformly tight bounds.
\end{abstract}

\section{Introduction}

Multi-armed bandit (MAB) algorithms are widely adopted in scenarios of decision-making under uncertainty \cite{Csa18}.  In theoretical MAB studies, two particularly common performance goals are {\em regret minimization} and {\em best-arm identification}, and this paper is more closely related to the latter.

The most basic form of best-arm identification seeks to identify the arm with the highest mean (e.g., see \cite{kaufmann2016complexity}).  Other variations are instead based on returning {\em multiple arms}, such as the $k$ believed to have the highest $k$ means \cite{kalyanakrishnan2012pac}, or the individual highest-mean arms within a pre-defined set of groups that may be non-overlapping \cite{gabillon2011MultiBandit} or overlapping \cite{scarlett2019overlapping}.

In this paper, we introduce a distinct problem setup in which we are again given a collection of (possibly overlapping) groups of arms, but the goal is to identify the group whose {\em worst arm} (in terms of the mean reward) is as high as possible.  To motivate this problem setup, we list two potential applications:
\begin{itemize}
    \item In recommendation systems, the groups may correspond to packages of items than can be offered or advertised together.  If the users are highly averse to poor items, then it is natural to model the likelihood of clicking/purchasing as being dictated by the worst item.
    \item In a resource allocation setting, suppose that we would like to choose the best group of computing machines (or other resources), but we require robustness because the slowest machine will be bottleneck when it comes to running jobs. Then, we would like to find the group whose worst-case machine is the best.
\end{itemize}
More generally, this problem captures the notion of a group {\em only being as strong as its weakest link}, and is closely related to widely-studied robust optimization problems (e.g., \cite{Ber10}).

Before describing our main contributions, we provide a more detailed overview of the most related existing works.

\subsection{Related Work}

The related work on multi-armed bandits and best-arm identification is extensive; we only provide a brief outline here with an emphasis on the most closely related works.

The standard best-arm identification problem was studied in \cite{audibert2010bestarmid,gabillon2012unified,jamieson2014best,kaufmann2016complexity,garivier2016optimal}, among others.  These works are commonly distinguished according to whether the time horizon is fixed (fixed-budget setting) or the target error probability is fixed (fixed-confidence setting), and the latter is more relevant to our work.  In particular, we will utilize anytime confidence bounds from \cite{jamieson2014best} in our upper bounds, as well as a fundamental change-of-measure technique from \cite{kaufmann2016complexity} in our lower bounds.

A notable {\em grouped} best-arm identification problem was studied in \cite{gabillon2011MultiBandit,bubeck2013multipleident}, where the arms are partitioned into disjoint groups, and the goal is to find the best arm in each group.  A generalization of this problem to the case of overlapping groups was provided in  \cite{scarlett2019overlapping}.  Another notable setting in which multiple arms are returned is that of subset selection, where one seeks to find a subset of $k$ arms attaining the highest mean rewards \cite{kalyanakrishnan2012pac,pmlr-v30-Kaufmann13,kaufmann2016complexity}.  In our understanding, all of these works are substantially different from the max-min grouped bandit problem that we consider.

Another setup of interest is the recently-proposed categorized bandit problem \cite{jedor2020categorized}.  This setting consists of disjoint groups with a partial ordering, and the knowledge of the group structure (but not their order) is given as prior information.  However, different from our setting, the goal is still to find the best overall arm (or more precisely, minimize the corresponding regret notion).  In addition, the results of \cite{jedor2020categorized} are based on the arm means satisfying certain partial ordering assumptions between the groups (e.g., all arms in a better group beat all arms in a worse group), whereas we consider general instances without such restrictions.  See also \cite{bouneffouf2019optimal,ban2021local,singh2020multiarmed} and the references therein for other MAB settings with a clustering structure.

Our setup can be viewed as a MAB counterpart to {\em robust optimization}, which has received considerable attention on continuous domains \cite{Ber10,chen2017robust,bogunovic2018adversarially}, as well as set-valued domains with submodular functions \cite{krause2008robust,orlin2018robust,Bog17}.  Robust maximization problems generically take the form $\max\limits_{x \in D_x} \min\limits_{c \in D_c} f(x,c)$, and in \refsec{sec:stable} we will explicitly connect our setup to the kernelized robust optimization setting studied in \cite{bogunovic2018adversarially} (see also \cite{yang2021robust} for a related follow-up work).  However, based on what is currently known, this connection will only provide relatively loose instance-independent bounds when applied to our setting, and the bounds derived in our work (both instance-dependent and instance-independent) will require a separate treatment.

\subsection{Contributions and Paper Structure}

The paper is outlined as follows:
\begin{itemize}[noitemsep,topsep=0pt,parsep=0pt,partopsep=0pt]
    \item In \refsec{sec:setup}, we formally introduce the max-min grouped bandit problem, and briefly discuss a naive approach.  
    \item In \refsec{sec:se}, we present an algorithm based on successive elimination, and derive an instance-dependent upper bound on time required to find the optimal group.  
    \item In \refsec{sec:stable}, we show that our setup can be cast under the framework of kernel-based robust optimization, and use this connection to adapt an algorithm from \cite{bogunovic2018adversarially}.  We additionally derive an instance-independent regret bound (i.e., a bound on the suboptimality of the declared group relative to the best).  
    \item In \refsec{sec:lower}, we return to considering  instance-dependent bounds, and complement our upper bound with an algorithm-independent lower bound.  
    \item In \refsec{sec:discussion}, we further discuss our bounds, including highlighting cases where they are tight vs.~loose, and the difficulty in deriving uniformly tight bounds.
    \item In \refsec{sec:exp}, we present numerical experiments investigating the relative performance of the algorithms considered.
\end{itemize}

\section{Problem Setup} \label{sec:setup}
We first describe the problem aspects that are the same as the regular MAB problem.  We consider a collection ${\cal A} = \{1,\dotsc,n\}$ of $n$ arms/actions.  In each round, indexed by $t$, the MAB algorithm selects an arm $j_t \in {\cal A}$, and observes the corresponding reward $X_{j_t, T_{j_t}(t)}$, where $T_{j}(t)$ is the number of pulls of arm $j$ up to time $t$.  We consider stochastic rewards, in which for each $j \in {\cal A}$, the random variables $\{X_{j,\tau}\}_{\tau \ge 1}$ are independently drawn from some unknown distribution with mean $\mu_j$.  We will consider algorithms that make use of the empirical mean, defined as $\hat{\mu}_{j, T_j(t)} = \frac{1}{T_j(t)}\sum_{s=1}^{T_j(t)}X_{j,s}$.

Different from the standard MAB setup, we assume that there is a known set of groups ${\cal G}$, where each group $G \in \cal G$ is a non-empty subset of ${\cal A}$. We allow overlaps between groups, i.e., a given arm may appear in multiple groups.  Without loss of generality, we assume that each arm is in at least one group.  We are interested in identifying the {\em max-min optimal group}, defined as follows:
\begin{equation} \label{def: G*}
    G^* = \argmax_{G \in {\cal G}} \min_{j \in G} \mu_j.
\end{equation}
To lighten notation, we define $j_{\mathrm{worst}}(G) = \argmin_{j \in G} \mu_j$ to be the arm in $G$ with the lowest mean; if this is non-unique, we simply take any one of them chosen arbitrarily.

After $T$ rounds (where $T$ may be fixed in advance or adaptively chosen based on the rewards), the algorithm outputs a recommendation $G^{(T)}$ representing its guess of the optimal group.  We consider two closely-related performance measures, namely, the error probability
\begin{equation} \label{def: P_e }
    P_e = \mathbb{P}[G^{(T)} \neq G^*],
\end{equation}
and the simple regret
\begin{equation} \label{def: simple_regret}
    r(G^{(T)}) = \mu_{j_{\rm worst}(G^*)} - \mu_{j_{\rm worst}(G^{(T)})}.
\end{equation}
Naturally, we would like $P_e$ and/or $r(G^{(T)})$ to be as small as possible, while also using as few arm pulls as possible.

\subsection{Assumptions} \label{sec:assump}

Throughout the paper, we will make use of several assumptions that are either standard in the literature, or simple variations thereof.  We start with the following.

\begin{assumption} \label{as:noise}
    We assume that the arm means are bounded in $[0,1]$,\footnote{Any finite interval can be shifted and scaled to this range.} and that the reward distributions are sub-Gaussian with parameter $R$, i.e., if $X_j$ is a random variable drawn from the $j$-th arm's distribution, then $\mathbb{E}[X_{j}] = \mu_j$ and $\mathbb{E}[e^{\lambda (X_{j} - \mu_{j})}] \leq \exp(\lambda^2 R^2/2)$ for all $\lambda \in \mathbb{R}$.
\end{assumption}

We will consider Gaussian and Bernoulli rewards as canonical examples of distributions satisfying \refas{as:noise}.

The next assumption serves as a natural counterpart to that of having a unique best arm in the standard best-arm identification problem, i.e., an {\em identifiability} assumption.

\begin{assumption}  \label{as:unique}
    There exists a unique group $G^*$ with the highest worst arm, i.e., 
    \begin{equation} 
        \min\limits_{j \in G^*} \mu_j > \max\limits_{G \in {\cal G}: G \neq G^*}\min\limits_{j \in G} \mu_j.
    \end{equation} 
\end{assumption}

With this assumption in mind, we now turn to defining {\em fundamental gaps} between the arm means.  These are also ubiquitous in instance-dependent studies of MAB problems, but are somewhat different here compared to other settings.

Recall that $j_{\mathrm{worst}}(G)$ is the worst arm in a group $G \in {\cal G}$. We define the difference between the worst arm of $G^*$ and the worst arm of group $G$ as $\Delta_G = \mu_{j_{\mathrm{worst}}(G^*)} - \mu_{j_{\mathrm{worst}}(G)}$.  Then, for each arm indexed by $j$, the following quantities will play a key role in our analysis:
\begin{itemize}
    \item $\Delta_j' := \min\limits_{G \,:\, j \in G} \big( \mu_j - \mu_{j_{\mathrm{worst}}(G)}\big)$ is the minimum distance between (the mean reward of) $j$ and the worst arm $j_{\mathrm{worst}}(G)$ in any of the groups containing $j$. This gap determines when $j$ can be removed (i.e., no longer pulled) if it is not a worst arm in any group.
    \item $\Delta_j'' := \min\limits_{G \,:\, j \in G}\Delta_G$ is the minimum distance between the worst arm in the optimal group $G^*$ and the worst arm in any of the groups containing $j$. This gap determines when all the groups that $j$ is in can be ruled out as being suboptimal. If $j$ is not in the optimal group $G^*$, the removal of these groups also amounts to $j$ being removed, whereas if $j$ is in $G^*$, this value becomes zero.
    \item $\Delta_0 := \min\limits_{G \,:\, G \neq G^*} \Delta_G$ is a fixed constant indicating the distance between the worst arm in the optimal group $G^*$ and the best among the worst arms in the remaining suboptimal groups. This gap determines when the optimal group is found (and the algorithm terminates).
\end{itemize}

Following the definitions above, we define the overall gap associated with each arm $j$ as follows:
\begin{equation}  \label{def:gap}
    \Delta_j = \max\{\Delta_j', \Delta_j'', \Delta_0 \} > 0.
\end{equation}
In \refsec{sec:se}, we will present an algorithm such that, with high probability, arm $j$ stops being sampled after a certain number of pulls dependent on $\Delta_j$.

\subsection{Failure of Naive Approach} \label{sec:failure}

A simple algorithm to solve the max-min grouped bandit problem is to treat it as a combination of $|{\cal G}|$ worst arm search problems. We can consider each group separately, and identify the worst arm for each group via a ``best''-arm identification algorithm (trivially adapted to find the {\em worst} arm instead of the best) such as UCB or LUCB \cite{jamieson2014best}.  We can then rank the worst arms in the various groups to find the optimal group. 
    
However, this method may be highly suboptimal, as it ignores the comparisons between arms from different groups during the search. For instance, consider a setting in which ${\cal G} = \{G_1, G_2\}$ with $G_1 = \{1, \dotsc, k\}$ and $G_2 = \{k+1,\dotsc,n\}$. Suppose that $\mu_1 = 1> ...> \mu_{k-1} = 0.9> \mu_k = 0.8$ and $\mu_{k+1} = 0.1>...> \mu_{n-1} = 0.01 > \mu_n = 0.00999$. We observe that finding the worst arm in $G_2$ is highly inefficient due to the narrow gap of $0.01 - 0.00999$ between arms $n-1$ and $n$. On the other hand, a simple comparison between the observed values of arms from $G_1$ and $G_2$ can relatively quickly verify that all arms in $G_1$ are better than all arms in $G_2$, without needing to know the precise ordering of arms within either group. 

\subsection{Auxiliary Results} \label{sec:aux}

As is ubiquitous in MAB problems, our analysis relies on confidence bounds.  Despite our distinct objective, our setup still consists of regular arm pulls, and accordingly, we can utilize well-established confidence bounds for stochastic bandits.  Many such bounds exist with varying degrees of simplicity vs.~tightness, and to ease the exposition, we do not seek to optimize this trade-off, but instead focus on one representative example given as follows.

\begin{lemma} \label{lem:conf}
    {\em (Law of Iterated Logarithm \cite{jamieson2014best})} 
    Let $Z_1, Z_2, \dotsc$ be i.i.d sub-Gaussian random variables with mean $\mu \in \mathbb{R}$ and parameter $\sigma \leq \frac{1}{2}$.  For any $\epsilon \in (0,1)$ and $\delta \in (0, \frac{1}{e} \log(1+\epsilon))$, with probability at least $1 - \frac{2 + \epsilon}{\epsilon/2}(\frac{\delta}{\log(1+\epsilon)})^{1+\epsilon}$, we have
        \begin{equation}
            \left|\frac{1}{t} \sum\limits_{s = 1}^{t} Z_s - \mu \right| \leq U(t, \delta) \;\; \forall t \geq 1,
        \end{equation}
        where
        \begin{equation}
            U(t, \delta) = (1+ \sqrt{\epsilon}) \sqrt{\frac{1 + \epsilon}{2t} \log\frac{\log(1 + \epsilon)t}{\delta}}. \label{eq:U_def}
        \end{equation}
\end{lemma}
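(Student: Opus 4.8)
The plan is to establish the one-sided statement that $\PP\bigl[\exists\, t \ge 1 : \frac{1}{t}\sum_{s=1}^t Z_s - \mu > U(t,\delta)\bigr]$ is small, and then double it by symmetry (applying the same argument to $-Z_s$) to recover the two-sided absolute-value bound. Writing $S_t = \sum_{s=1}^t (Z_s - \mu)$, the bad event is that $S_t$ exceeds the threshold $t\,U(t,\delta) = (1+\sqrt{\epsilon})\sqrt{\frac{(1+\epsilon)t}{2}\log\frac{\log(1+\epsilon)t}{\delta}}$ for some $t$. I first note that the hypothesis $\delta < \frac{1}{e}\log(1+\epsilon)$ forces $\log\frac{\log(1+\epsilon)t}{\delta} > 0$ for every $t \ge 1$, so this threshold is real and strictly positive throughout, which is what makes the subsequent tail bounds meaningful.

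The first ingredient is a maximal inequality. Since the $Z_s$ are i.i.d.\ sub-Gaussian with parameter $\sigma$, the process $e^{\lambda S_t}$ is a nonnegative submartingale for each fixed $\lambda > 0$. Applying Doob's maximal inequality and then optimizing over $\lambda$ yields $\PP\bigl[\max_{t \le m} S_t \ge a\bigr] \le \exp\bigl(-\frac{a^2}{2\sigma^2 m}\bigr)$ for any threshold $a > 0$ and any horizon $m$.

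The second ingredient is a peeling (geometric slicing) argument over the time axis. I partition $\{t \ge 1\}$ into the blocks $D_k = \{t : (1+\epsilon)^k \le t < (1+\epsilon)^{k+1}\}$ for $k = 0,1,2,\dotsc$. Because $t\,U(t,\delta)$ is increasing in $t$, on each block $D_k$ the event $\{S_t > t\,U(t,\delta)\}$ is contained in $\{\max_{t \le (1+\epsilon)^{k+1}} S_t > a_k\}$, where $a_k$ is the threshold evaluated at the left endpoint $t = (1+\epsilon)^k$. Feeding $m = (1+\epsilon)^{k+1}$ and $a = a_k$ into the maximal inequality, the resulting exponent equals $\frac{(1+\sqrt{\epsilon})^2}{4\sigma^2}\log\frac{\log(1+\epsilon)(1+\epsilon)^k}{\delta}$; using $\sigma \le \frac{1}{2}$ (so $4\sigma^2 \le 1$) together with $(1+\sqrt{\epsilon})^2 = 1 + 2\sqrt{\epsilon} + \epsilon \ge 1+\epsilon$, this exponent is at least $(1+\epsilon)\log\frac{\log(1+\epsilon)(1+\epsilon)^k}{\delta}$, so block $D_k$ contributes at most $\bigl(\frac{\delta}{\log(1+\epsilon)(1+\epsilon)^k}\bigr)^{1+\epsilon}$ to the failure probability.

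Finally I sum over $k$. Factoring out $\bigl(\frac{\delta}{\log(1+\epsilon)}\bigr)^{1+\epsilon}$ leaves the geometric series $\sum_{k \ge 0}(1+\epsilon)^{-k(1+\epsilon)} = \bigl(1 - (1+\epsilon)^{-(1+\epsilon)}\bigr)^{-1}$, which I bound by $\frac{2+\epsilon}{\epsilon}$ using the elementary inequality $(1+\epsilon)^{1+\epsilon} \ge 1 + \frac{\epsilon}{2}$. Doubling for the two-sided event then produces exactly the claimed failure probability $\frac{2+\epsilon}{\epsilon/2}\bigl(\frac{\delta}{\log(1+\epsilon)}\bigr)^{1+\epsilon}$. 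I expect the main obstacle to be the bookkeeping inside the peeling step: one must justify replacing the increasing threshold by its left-endpoint value on each block, and verify that the constants $(1+\sqrt{\epsilon})$ and $\sigma \le \frac{1}{2}$ are precisely what is needed to push the per-block exponent up to $1+\epsilon$. By comparison, the maximal inequality and the concluding geometric summation are routine.
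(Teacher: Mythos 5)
Your overall architecture --- Doob's maximal inequality applied to the exponential submartingale $e^{\lambda S_t}$, geometric time-peeling over the blocks $[(1+\epsilon)^k,(1+\epsilon)^{k+1})$, a union bound over blocks, and a final doubling for the two-sided statement --- is exactly the argument behind this lemma in the cited source; the paper itself offers no proof and simply quotes the result from \cite{jamieson2014best}. Your per-block exponent computation, the use of $\sigma\le\frac{1}{2}$ together with $(1+\sqrt{\epsilon})^2\ge 1+\epsilon$ to push the exponent up to $1+\epsilon$, and the monotonicity justification for replacing the threshold by its left-endpoint value on each block are all correct.

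The gap is in the final summation, and it stems from misreading the threshold. The quantity inside the outer logarithm in \refeq{eq:U_def} is $\log\bigl((1+\epsilon)t\bigr)/\delta$ --- an iterated logarithm of $t$, as the lemma's name indicates and as the paper's own later manipulation confirms (in the proof of \refthm{thm:ub_so} the term $\log\frac{\log((1+\epsilon)t)}{\delta/n}$ is split into $\log\frac{n}{\delta}+\log\log((1+\epsilon)t)$) --- not $(\log(1+\epsilon))\cdot t$. At the left endpoint $t=(1+\epsilon)^k$ this evaluates to $(k+1)\log(1+\epsilon)$, so block $k$ contributes at most $\bigl(\frac{\delta}{(k+1)\log(1+\epsilon)}\bigr)^{1+\epsilon}$, and after factoring out $\bigl(\frac{\delta}{\log(1+\epsilon)}\bigr)^{1+\epsilon}$ you are left with $\sum_{j\ge 1}j^{-(1+\epsilon)}\le 1+\int_1^\infty x^{-(1+\epsilon)}\,dx=\frac{1+\epsilon}{\epsilon}\le\frac{2+\epsilon}{\epsilon}$, a zeta-type sum rather than a geometric series; your inequality $(1+\epsilon)^{1+\epsilon}\ge 1+\frac{\epsilon}{2}$ plays no role. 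As written, your argument establishes a bound with confidence width of order $\sqrt{(\log t)/t}$ rather than $\sqrt{(\log\log t)/t}$, which is no better than a naive union bound over $t$ and would break the $\log\log$ dependence that propagates into \reflem{lem:inversion} and \refthm{thm:ub_se}--\refthm{thm:ub_so}. The repair is the one-line replacement of the geometric series by the sum above; everything else in your proof stands.
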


In accordance with this result, we henceforth assume that $R \le \frac{1}{2}$ in \refas{as:noise}, which notably always holds for Bernoulli rewards.

Since the error probability is dependent on the entire set of $n$ arms, we further replace $\delta$ by $\frac{\delta}{n}$ in \reflem{lem:conf} and apply a union bound, which leads to the following upper/lower confidence bound of arm $j$ in round $t$:
\begin{align}
    \mathrm{UCB}_t(j) &= \hat{\mu}_{j, T_j(t)} + U\left(T_j(t), \frac{\delta}{n}\right) \label{def:UCB}\\
    \mathrm{LCB}_t(j) &= \hat{\mu}_{j, T_j(t)} - U\left(T_j(t), \frac{\delta}{n}\right). \label{def:LCB}
\end{align}
Hence, with probability at least $1-\frac{2 + \epsilon}{\epsilon/2}(\frac{\delta}{\log(1+\epsilon)})^{1+\epsilon}$,
\begin{align}
    \mathrm{LCB}_t(j) \leq \mu_j \leq \mathrm{UCB}_t(j), \; \forall j \in \{1, ...,n\}, t \geq 1.
\end{align}
To derive the performance bounds for our algorithms, we further require the following lemma:
\begin{lemma} \label{lem:inversion}
    {\em (Inversion of $U(t, \delta)$ \cite{jamieson2014best}) }
    For any $\epsilon \in (0,1)$, $\delta \in (0, \frac{1}{e} \log(1+\epsilon))$, and $\Delta \in (0,1)$, we have
    \begin{equation} \label{eq:interval}
        \min\left\{k: U\left(k, \frac{\delta}{n}\right) \leq \frac{\Delta}{4}\right\} \leq \frac{2\gamma}{\Delta^2} \log \frac{2\log(\gamma(1+ \epsilon)\Delta^{-2})}{\delta/n}
    \end{equation}
    where
    $\gamma = 8(1+\sqrt{\epsilon})^2(1 + \epsilon)$.
\end{lemma}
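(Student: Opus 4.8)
The plan is to exploit the fact that, for fixed $\delta$, the confidence width $U(\cdot,\delta/n)$ in \refeq{eq:U_def} is strictly decreasing in its first argument. Consequently the minimising $k$ is simply the first index at which $U$ drops to $\Delta/4$, and it suffices to exhibit a single explicit value $k_0$ — the claimed right-hand side — and verify $U(k_0,\delta/n)\le\Delta/4$. Monotonicity then forces the minimising index to lie at or below $k_0$, with the usual integer-rounding slack absorbed by the loose constants.

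The first substantive step is to convert $U(k_0,\delta/n)\le\Delta/4$ into a clean algebraic condition. Squaring \refeq{eq:U_def} and substituting $\gamma = 8(1+\sqrt{\epsilon})^2(1+\epsilon)$ to clear the prefactor $(1+\sqrt{\epsilon})^2(1+\epsilon)=\gamma/8$ collapses the target to the self-referential inequality
\[ k_0 \;\ge\; \frac{\gamma}{\Delta^2}\,\log\frac{\log(1+\epsilon)\,k_0}{\delta/n}. \]
This is the heart of the matter: an implicit bound of the generic shape $k\ge a\log(bk)$ with $a=\gamma/\Delta^2$ and $b=n\log(1+\epsilon)/\delta$, which must be solved explicitly for $k_0$.

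The second step is to prove and then invoke the elementary inversion principle: for $a,b>0$ with $ab$ bounded away from $1$, any $k_0\ge 2a\log(ab)$ already satisfies $k_0\ge a\log(bk_0)$. The verification is the standard nested-logarithm computation — writing $\log(bk_0)=\log\bigl(2ab\log(ab)\bigr)=\log 2+\log(ab)+\log\log(ab)$, it suffices that $\log(ab)\ge \log 2+\log\log(ab)$, which follows from the uniform inequality $x-\log x\ge 1>\log 2$ applied at $x=\log(ab)$. Here the hypothesis $\delta\in\bigl(0,\tfrac1e\log(1+\epsilon)\bigr)$ plays its role: it yields $b=n\log(1+\epsilon)/\delta>e$, hence $ab>e$ and $\log(ab)>1$, so the iterated logarithm is well defined and positive and the slack is genuine.

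Finally I would feed $a=\gamma/\Delta^2$ and $b=n\log(1+\epsilon)/\delta$ into $2a\log(ab)$ and massage the resulting expression into the stated closed form. I expect this last matching step, together with the nested-log estimate, to be the main obstacle: one must track how much room the leading factor of $2$ and the constants inside the logarithm leave after the lower-order $\log\log$ terms are discarded, and make sure the monotonicity-and-rounding argument is applied in the correct direction. Everything beyond this is routine algebra; the delicate bookkeeping lies entirely in controlling the doubly-nested logarithm and confirming that the regime enforced on $\delta$ (and implicitly on $\Delta$) supplies the slack the stated constants require.
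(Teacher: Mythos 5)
First, note that the paper itself does not prove this lemma; it is imported verbatim from \cite{jamieson2014best}, so there is no internal proof to compare against and your argument must stand on its own. Your overall strategy is the right (and essentially the only) one: exhibit an explicit $k_0$ equal to the claimed right-hand side, verify $U(k_0,\delta/n)\le\Delta/4$, and conclude that the minimum of a set containing $k_0$ is at most $k_0$ (monotonicity of $U$ is not even needed for this last step).

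The proof breaks down, however, at the parsing of \refeq{eq:U_def}. The quantity inside the outer logarithm is $\log\bigl((1+\epsilon)t\bigr)/\delta$, not $t\log(1+\epsilon)/\delta$; this is confirmed by the way the paper expands the same expression in \refeq{eq:SO_new3}, where a term $\log\log((1+\epsilon)t)$ appears. Under the correct parsing, squaring $U(k_0,\delta/n)\le\Delta/4$ yields the doubly-nested condition
\begin{equation*}
k_0\;\ge\;\frac{\gamma}{\Delta^2}\,\log\!\Bigl(\frac{n}{\delta}\log\bigl((1+\epsilon)k_0\bigr)\Bigr),
\end{equation*}
not the single-nested $k_0\ge a\log(bk_0)$ that you derive. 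This is not cosmetic: your candidate $2a\log(ab)$ with $a=\gamma/\Delta^2$ and $b=n\log(1+\epsilon)/\delta$ evaluates to $\frac{2\gamma}{\Delta^2}\log\frac{\gamma\Delta^{-2}\log(1+\epsilon)}{\delta/n}$, whose inner argument contains $\gamma\Delta^{-2}$ itself, whereas the stated bound contains only $\log(\gamma(1+\epsilon)\Delta^{-2})$. For small $\Delta$ your expression is therefore unboundedly larger than the claimed right-hand side, so the final ``massaging'' step that you yourself flag as the main obstacle cannot be carried out; conversely, under your parsing of $U$ the minimal admissible $k$ genuinely scales like $\frac{\gamma}{\Delta^2}\log(\gamma\Delta^{-2})$ rather than $\frac{\gamma}{\Delta^2}\log\log(\gamma\Delta^{-2})$, so the stated bound would be false and no proof could succeed from that starting point. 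The repair is direct: with the correct parsing, substitute $k_0=\frac{2\gamma}{\Delta^2}L$ with $L=\log\frac{2\log(\gamma(1+\epsilon)\Delta^{-2})}{\delta/n}$ into the doubly-nested condition; the factor of $2$ multiplying $\gamma/\Delta^2$ and the factor of $2$ inside the logarithm are exactly the slack needed to absorb the resulting $\log L$ and iterated-logarithm terms, via the same $x-\log x\ge 1$ estimate you already invoke, with the hypothesis $\delta<\frac{1}{e}\log(1+\epsilon)$ guaranteeing positivity of all the nested logarithms.
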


\section{Successive Elimination} \label{sec:se}

Elimination-based algorithms are widely used in MAB problems.  In the standard best-arm identification setting, the idea is to sample arms in batches and then eliminate those known to be suboptimal based on confidence bounds, until only one arm remains.  In the max-min setting that we consider, we can use a similar idea, but we need to carefully consider the conditions under which an arm no longer needs to be pulled.  We proceed by describing this and giving the relevant definitions.
    
We will work in epochs indexed by $i$, and let $t_i$ denote the number of arm pulls up to the start of the $i$-th epoch.  For each group $G$, we define the set of potential worst arms as
\begin{align} \label{def:m_i}
    m_i^{(G)}:= \Bigg\{j \in G: \mathrm{LCB}_{t_i}(j) \leq \min\limits_{j' \in G} \mathrm{UCB}_{t_i}(j')\Bigg\}.
\end{align}
This definition will allow us to eliminate arms that are no longer potentially worst in any group.  We additionally consider a set of candidate {\em potentially optimal} groups ${\cal C}_i$, initialized ${{\cal C}_0} = \cal G$ and subsequently updated as follows:
\begin{align} \label{def:C_i}
	\begin{split}
    {\cal C}_{i+1} := &\Bigg\{G \in {\cal C}_i: \ \min\limits_{j' \in m_i^{(G')}} \mathrm{LCB}_{t_i}(j') \leq \\
                              &\min\limits_{j\in m_i^{(G)}} \mathrm{UCB}_{t_i}(j), \ \forall G' \in {\cal C}_i\Bigg\}.
    \end{split}
\end{align}
This definition allows us to stop considering any groups that are already believed to be suboptimal.

Finally, the set of candidate arms (i.e., arms that we still need to continue pulling) is given by
\begin{align} \label{def:A_i}
    {\cal A}_i := \; \{j: j \in m_i^{(G)} \;  \text{for at least one } G \in {\cal C}_i\}.
\end{align}
With these definitions in place, pseudo-code for the successive elimination algorithm is shown in \refalg{alg:elimination}.

\begin{algorithm}[!t]
    \caption{Successive Elimination algorithm}\label{alg:elimination}
    \begin{algorithmic}[1]
        \Require~~ Arms $(a_1, ..., a_n)$, set of groups $\cal G$, parameters $\delta, \; \epsilon > 0$
        \State Initialize $ i = 0, t = 0$ and $T_j(t) = 0$ for all $j$
        \State Set $m_0^{(G)} = G \;$ for all $G \in {\cal G}$, ${{\cal C}_0} = {\cal G}, {\cal A}_0 = \{1,2,...,n\} $
        \While {$|{\cal C}_i| > 1$}
        \State Pull every arm $j$ in ${\cal A}_i$ once, incrementing $t$ after each pull and updating all $T_j(t)$
        \State Compute $m_{i+1}^{(G)}, \; {\cal C}_{i+1}$ and ${\cal A}_{i+1}$ via expressions {\refeq{def:m_i}}, {\refeq{def:C_i}} and {\refeq{def:A_i}}
        \State Increment round index $i$ by 1
        \EndWhile 
        \State \textbf{return} $\hat{\cal C} = {\cal C}_i$
    \end{algorithmic}
\end{algorithm}

We now state our main result regarding this algorithm.

\begin{thm} \label{thm:ub_se}
    {\em (Upper Bound for Successive Elimination)}
    For any max-min grouped bandit instance as defined in \refsec{sec:setup}, given $\epsilon \in (0,1)$ and $\delta \in (0, \frac{1}{e} \log(1+\epsilon))$, with probability at least $1-\frac{2 + \epsilon}{\epsilon/2}\big(\frac{\delta}{\log(1+\epsilon)}\big)^{1+\epsilon}$,  \refalg{alg:elimination} identifies the optimal group and uses a number of arm pulls satisfying
    \begin{align}
        T(\delta, \epsilon) \leq \sum\limits_{j = 1}^n  \frac{2\gamma}{{\Delta_j}^2} \log \frac{2\log(\gamma(1+ \epsilon){\Delta_j}^{-2})}{{\delta}/n}, \label{eq:T_total}
    \end{align}
    where $\gamma = 8(1+\sqrt{\epsilon})^2(1 + \epsilon)$.
\end{thm}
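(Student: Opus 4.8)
The plan is to condition throughout on the ``good event'' $\mathcal{E}$ that all confidence bounds hold, i.e.\ $\LCB_t(j) \le \mu_j \le \UCB_t(j)$ for every arm $j$ and every $t \ge 1$; by \reflem{lem:conf} together with the union bound over the $n$ arms (replacing $\delta$ by $\delta/n$), $\mathcal{E}$ occurs with at least the claimed probability, and on $\mathcal{E}$ the whole analysis is deterministic. Two structural observations drive everything. First, since Line~4 pulls \emph{every} arm of $\mathcal{A}_i$ exactly once per epoch and all arms begin in $\mathcal{A}_0$, any arm still active at the start of epoch $i$ has been pulled exactly $i$ times; hence all simultaneously-active arms share the common half-width $U(i,\delta/n)$, which removes the usual difficulty of two compared arms having unequal widths. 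Second, the true worst arm of any group $G$ is never dropped from $m_i^{(G)}$: on $\mathcal{E}$ we have $\min_{j' \in G}\UCB_{t_i}(j') \ge \min_{j' \in G}\mu_{j'} = \mu_{\jworst(G)} \ge \LCB_{t_i}(\jworst(G))$, so $\jworst(G) \in m_i^{(G)}$ while $G$ is active, and in particular $\min_{j' \in G}\UCB_{t_i}(j')$ is always witnessed by an up-to-date (active) arm.

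Next I would establish \textbf{correctness}. To see that $G^*$ is never removed from $\mathcal{C}_i$, note that for any competing $G' \in \mathcal{C}_i$, using $\jworst(G') \in m_i^{(G')}$ and the max-min optimality of $G^*$, $\min_{j' \in m_i^{(G')}}\LCB_{t_i}(j') \le \mu_{\jworst(G')} \le \mu_{\jworst(G^*)} \le \min_{j \in m_i^{(G^*)}}\UCB_{t_i}(j)$, which is exactly the retention condition in \refeq{def:C_i}. Conversely, for a suboptimal group $G$ the same sandwiching gives $\min_{j \in m_i^{(G)}}\UCB_{t_i}(j) \le \mu_{\jworst(G)} + 2U(i,\delta/n)$ and $\min_{j' \in m_i^{(G^*)}}\LCB_{t_i}(j') \ge \mu_{\jworst(G^*)} - 2U(i,\delta/n)$, so once $4U(i,\delta/n) < \Delta_G$ the group $G$ fails the retention test against $G' = G^*$ (which is present by the previous sentence) and is eliminated. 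Since every suboptimal $G$ has $\Delta_G \ge \Delta_0$, as soon as $U(i,\delta/n) < \Delta_0/4$ all of them are gone, so $\mathcal{C}_i = \{G^*\}$ and the algorithm halts returning $G^*$; uniqueness (\refas{as:unique}) guarantees $\Delta_0 > 0$.

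The \textbf{sample-complexity} bound then follows by showing each arm $j$ ceases to be pulled once $U(i,\delta/n) \le \Delta_j/4$, via whichever of the three gaps attains the maximum in $\Delta_j = \max\{\Delta_j',\Delta_j'',\Delta_0\}$. If $U(i,\delta/n) < \Delta_j'/4$, then for every active group $G \ni j$, using the common width and $\jworst(G) \in m_i^{(G)}$, $\LCB_{t_i}(j) - \min_{j' \in G}\UCB_{t_i}(j') \ge (\mu_j - \mu_{\jworst(G)}) - 4U(i,\delta/n) \ge \Delta_j' - 4U(i,\delta/n) > 0$, so $j$ leaves every such $m_i^{(G)}$ and hence leaves $\mathcal{A}_i$. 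If instead $U(i,\delta/n) < \Delta_j''/4$, the elimination argument of the previous paragraph applied to each $G \ni j$ (whose gap is at least $\Delta_j''$) removes every group containing $j$ from $\mathcal{C}_i$, again dropping $j$ from $\mathcal{A}_i$; and $U(i,\delta/n) < \Delta_0/4$ triggers termination. Since a larger threshold is crossed with fewer pulls, arm $j$ is pulled at most $\min\{k : U(k,\delta/n) \le \Delta_j/4\}$ times; \reflem{lem:inversion} bounds this by $\frac{2\gamma}{\Delta_j^2}\log\frac{2\log(\gamma(1+\epsilon)\Delta_j^{-2})}{\delta/n}$, and since the total number of pulls equals $\sum_j T_j$, summing over $j = 1,\dots,n$ yields \refeq{eq:T_total}.

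The part I expect to be most delicate is this per-arm stopping argument: one must verify that each of the three mechanisms genuinely ejects $j$ from the sampled pool, that the relevant comparison arms (the worst arms of active groups and the arms of $G^*$) remain active so that the shared-width identity applies and no stale $\UCB$/$\LCB$ value is invoked where freshness is needed, and that the degenerate cases $\Delta_j' = 0$ (when $j$ is itself a worst arm) and $\Delta_j'' = 0$ (when $j \in G^*$) are harmless because the maximum defining $\Delta_j$ is then carried by one of the other, strictly positive, gaps. The strict-versus-nonstrict inequalities at the threshold $\Delta_j/4$ are routine and can be absorbed into the constants.
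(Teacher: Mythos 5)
Your proposal is correct and follows essentially the same route as the paper's proof: condition on the uniform validity of the LIL confidence bounds, show that $G^*$ and each group's true worst arm are never eliminated, argue that arm $j$ ceases to be pulled once the common confidence width of active arms drops below $\Delta_j/4$ (via whichever of $\Delta_j'$, $\Delta_j''$, $\Delta_0$ is relevant), and then invoke \reflem{lem:inversion} and sum over arms. The only differences are organizational (the paper splits cases by $j \in G^*$ versus $j \notin G^*$ rather than by mechanism), and your explicit handling of the equal-pull-count/up-to-date-witness issue and of the group-elimination test via $\min_{j' \in m_i^{(G^*)}} \mathrm{LCB}_{t_i}(j')$ is, if anything, slightly more careful than the paper's.
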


The proof is given in Appendix A, and is based on considering the gap $\Delta_j$ associated with each arm; we show that after the confidence width falls below $\frac{\Delta_j}{4}$, any such arm will be eliminated (or the algorithm will terminate), as long as the confidence bounds are valid.  Applying \reflem{lem:inversion} and summing over the arms then gives the desired result.

While the error term $\delta_0 = \frac{2 + \epsilon}{\epsilon/2}\big(\frac{\delta}{\log(1+\epsilon)}\big)^{1+\epsilon}$ in \refthm{thm:ub_se} is somewhat complicated, one can fix $\epsilon = \frac{1}{2}$ (say) and solve for $\delta$, and it readily follows that the right-hand side of \refeq{eq:T_total} has the standard $O\big( \log\frac{1}{\delta_0}\big)$ dependence.

\section{A Variant of {\scshape StableOpt}} \label{sec:stable}
    
In this section, we first discuss how the max-min grouped bandit problem is related to the problem of adversarially robust optimization. We then demonstrate that a robust optimization algorithm known as {\scshape StableOpt} \cite{bogunovic2018adversarially} can be adapted to our setting with instance-independent regret guarantees.

\paragraph{Connection to adversarially robust optimization} In general, adversarially robust optimization problems take the form $\max\limits_{x \in D_x} \min\limits_{c \in D_c} f(x,c)$, where $x$ is chosen by the algorithm, and $c$ can be viewed as being chosen by an adversary. 

The main focus in \cite{bogunovic2018adversarially} is finding an $\epsilon$-stable optimal input  for some function $f$:
\begin{align}
	x^*_{\epsilon} \in \argmax\limits_{x\in D_x}\min\limits_{\delta \in \Delta_{\epsilon}(x) }f(x+\delta), \label{eq:x*_eps}
\end{align}
where $\Delta_{\epsilon}(x) = \{x' - x: x\ \in D_x \;\text{and} \; d(x,x') \leq \epsilon\}$ is the perturbed region around $x$, and $d(\cdot,\cdot)$ is a generic ``distance'' function (but need not be a true distance measure).

In Appendix D, we discuss a partial reduction to a grouped max-min problem presented in \cite{bogunovic2018adversarially}, while also highlighting the looseness in directly applying the results therein to our setting.

\paragraph{Adapting {\scshape StableOpt}} The original {\scshape StableOpt} algorithm in \cite{bogunovic2018adversarially} corresponding to the problem \refeq{eq:x*_eps} selects $x_t = \argmax\limits_{x\in D_x}\min\limits_{\delta \in \Delta_{\epsilon}(x) }{\rm UCB}_{t - 1}(x+\delta)$, where $\delta_t = \argmin\limits_{\delta \in \Delta_{\epsilon}(x_t) }{\rm LCB}_{t - 1}(x_t+\delta)$, for suitably defined confidence bounds ${\rm UCB}_t$ and ${\rm LCB}_t$.  When adapted to our formulation \refeq{def: G*}, the algorithm becomes the following:
\begin{gather}
G_t = \argmax\limits_{G\in {\cal G}}\min\limits_{j \in G }\mathrm{UCB}_{t-1}(j) \label{def:SO_G_t}\\
j_t = \argmin\limits_{j \in G_t }\mathrm{LCB}_{t-1}(j), \label{def:SO_j_t}
\end{gather}
and the algorithm samples arm $j_t$ in round $t$.

Intuitively, this criterion selects the optimistic estimate of the best group and its pessimistic estimate for the worst arm via the $\mathrm{UCB}$ and $\mathrm{LCB}$ values computed in each round.  Instead of using the general RKHS-based confidence bounds in \cite{bogunovic2018adversarially}, we use those in \refeq{def:UCB}--\refeq{def:LCB}.

The method for breaking ties in \refeq{def:SO_G_t}--\refeq{def:SO_j_t} does not impact our analysis.  For instance, one could break ties uniformly at random, or one may prefer to break ties in \refeq{def:SO_G_t} by taking the group that attains the lower LCB score in \refeq{def:SO_j_t}.

\paragraph{Instance-independent regret bound} Deriving instance-dependent regret bounds for {\scshape StableOpt} appears to be challenging, though would be of interest for future work.  We instead focus on {\em instance-independent} bounds.  Since there always exist instances for which finding the best group requires an arbitrarily long time (e.g., see \refsec{sec:lower}), we instead measure the performance using the simple regret, whose definition is repeated from \refeq{def: simple_regret} as follows:

\begin{align}
r(G^{(T)}) = \max\limits_{G \in {\cal G}} \min\limits_{j \in G} \mu_j - \min\limits_{j \in G^{(T)}} \mu_j,
\end{align}
where $T$ is the time horizon, and $G^{(T)}$ is the group returned after $T$ rounds.  For {\scshape StableOpt}, the theoretical choice of $G^{(T)}$ is given by \cite{bogunovic2018adversarially}
\begin{align}
G^{(T)} = G_{t^*}, \quad t^* = \argmax\limits_{t \in \{1,..., N\}} \min\limits_{j \in G_t} \mathrm{LCB}_{t-1}(j). \label{eq:choice_G}
\end{align}
Here and subsequently, we define $\mathrm{LCB}_0(j) = 0$ and $\mathrm{UCB}_0(j) = 1$ in accordance with the fact that the arm means are in $[0,1]$, and for later convenience we similarly define $T_j(0) = 0$ and $U(0,\delta) = 1$.

With these definitions in place, we have the following result; we state a simplified form with fixed $\epsilon$ and an implicit constant factor, but give the precise constants in the proof.

\begin{thm}
{(\em Instance-Independent Regret Bound)} \label{thm:ub_so}
Suppose that \refas{as:noise} holds. Given $\delta \in (0, \frac{\log 2}{e})$, the above variant of {\scshape StableOpt} yields with probability at least $1 - O(\delta)$ that
\begin{align}
    r(G^{(T)}) &\le O\bigg( \sqrt{\frac{n}{T}} \Big( \sqrt{\log\frac{n}{\delta}} + \log \log T \Big)  \bigg). 
\end{align}

\end{thm}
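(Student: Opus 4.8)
The plan is to condition on the event $\mathcal{E}$ that the confidence bounds \refeq{def:UCB}--\refeq{def:LCB} hold simultaneously for all arms and all $t \ge 1$, which by \reflem{lem:conf} (applied with $\delta \to \delta/n$) and a union bound has probability at least $1-\frac{2+\epsilon}{\epsilon/2}\big(\frac{\delta}{\log(1+\epsilon)}\big)^{1+\epsilon}$; fixing $\epsilon$ (e.g. $\epsilon = 1$, which makes the constraint on $\delta$ read $\delta \in (0,\frac{\log 2}{e})$) renders this $1 - O(\delta)$. Write $\mathrm{OPT} = \min_{j \in G^*}\mu_j$. The first step is a per-round regret certificate. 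The selection rule \refeq{def:SO_G_t} gives $\min_{j \in G_t}\mathrm{UCB}_{t-1}(j) \ge \min_{j \in G^*}\mathrm{UCB}_{t-1}(j) \ge \mathrm{OPT}$ on $\mathcal{E}$, while the pointwise validity $\mathrm{LCB}_{t-1}(j) \le \mu_j$ yields $\min_{j\in G_t}\mathrm{LCB}_{t-1}(j) \le \min_{j \in G_t}\mu_j$. Since $j_t$ attains the inner minimum in \refeq{def:SO_j_t} and $j_t \in G_t$, combining these gives
\begin{align}
\mathrm{OPT} - \min_{j\in G_t}\mu_j &\le \min_{j\in G_t}\mathrm{UCB}_{t-1}(j) - \mathrm{LCB}_{t-1}(j_t) \nonumber\\
&\le \mathrm{UCB}_{t-1}(j_t) - \mathrm{LCB}_{t-1}(j_t) = 2\,U\!\left(T_{j_t}(t-1),\tfrac{\delta}{n}\right),
\end{align}
i.e.\ the regret of $G_t$ is at most twice the confidence width of the arm pulled in round $t$.

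Next I would convert this into a bound on the \emph{recommended} group $G^{(T)} = G_{t^*}$. Using $\min_{j\in G_{t^*}}\mathrm{LCB}_{t^*-1}(j) \le \min_{j\in G_{t^*}}\mu_j$ together with the definition of $t^*$ in \refeq{eq:choice_G} as the maximizer of the LCB score, for every $t$ we get
\begin{align}
r(G^{(T)}) = \mathrm{OPT} - \min_{j\in G_{t^*}}\mu_j \le \mathrm{OPT} - \min_{j\in G_t}\mathrm{LCB}_{t-1}(j) \le 2\,U\!\left(T_{j_t}(t-1),\tfrac{\delta}{n}\right),
\end{align}
where the last inequality reuses $\mathrm{OPT} \le \min_{j\in G_t}\mathrm{UCB}_{t-1}(j)$ and the certificate above. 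Minimizing over $t \in \{1,\dots,T\}$ and bounding the minimum by the average then gives $r(G^{(T)}) \le \frac{2}{T}\sum_{t=1}^{T} U(T_{j_t}(t-1),\frac{\delta}{n})$.

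The remaining work is to bound $\sum_{t=1}^{T} U(T_{j_t}(t-1),\frac{\delta}{n})$. I would reindex by arm: if $N_j = T_j(T)$ is the final pull count of arm $j$ (so $\sum_j N_j = T$), then arm $j$ is sampled at the pull counts $0,1,\dots,N_j-1$, so $\sum_t U(\cdots) = \sum_{j=1}^{n}\sum_{s=0}^{N_j-1} U(s,\frac{\delta}{n})$. From \refeq{eq:U_def}, for $1 \le s \le T$ we have $U(s,\frac{\delta}{n}) \le C\sqrt{\tfrac{1}{s}\big(\log\tfrac{n}{\delta}+\log\log T\big)}$ for an absolute constant $C$ (the $\log\log$ term is monotone and hence uniformly bounded by its value at $T$), while the convention $U(0,\cdot)=1$ contributes at most $n$ in total. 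Applying $\sum_{s=1}^{m} s^{-1/2} \le 2\sqrt{m}$ and then Cauchy--Schwarz $\sum_j \sqrt{N_j} \le \sqrt{n\sum_j N_j} = \sqrt{nT}$ yields
\begin{align}
\sum_{t=1}^{T} U\!\left(T_{j_t}(t-1),\tfrac{\delta}{n}\right) \le n + 2C\sqrt{\log\tfrac{n}{\delta}+\log\log T}\,\sqrt{nT}.
\end{align}

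Substituting back gives $r(G^{(T)}) \le \frac{2n}{T} + 4C\sqrt{\frac{n}{T}}\sqrt{\log\frac{n}{\delta}+\log\log T}$. Since the regret is trivially at most $1$ (means lie in $[0,1]$), the $\frac{2n}{T}$ term only matters when $T \ge n$, where it is dominated by the second term; and using $\sqrt{a+b}\le \sqrt a + \sqrt b$ with $\sqrt{\log\log T}\le \log\log T$ recovers the stated form $O\!\big(\sqrt{n/T}\,(\sqrt{\log(n/\delta)}+\log\log T)\big)$. I expect the routine-but-delicate part to be the summation in the third paragraph — in particular, justifying that the $\log\log$ factor inside $U$ can be extracted uniformly over $s \in \{1,\dots,T\}$ and correctly accounting for the $U(0,\cdot)=1$ boundary terms. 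The conceptual crux, however, is the certificate of the second paragraph, where the LCB-maximizing recommendation rule \refeq{eq:choice_G} is precisely what upgrades a per-round confidence-width bound into a guarantee on the single returned group $G^{(T)}$.
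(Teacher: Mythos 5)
Your proposal is correct and follows essentially the same route as the paper's proof: the same per-round certificate $r(G_t) \le 2\,U(T_{j_t}(t-1),\delta/n)$ obtained from the UCB/LCB selection rules, the same use of the LCB-maximizing recommendation $t^*$ to bound $r(G^{(T)})$ by the minimum (hence the average) over $t$, and the same $O(\sqrt{nT\log(\cdot)})$ bound on the accumulated confidence widths. The only cosmetic difference is in the last summation step, where you reindex per arm and apply Cauchy--Schwarz to $\sum_j \sqrt{N_j}$, whereas the paper argues the sum is maximized when pulls are equalized across arms; both yield the same $\sqrt{nT}$ factor.
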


The proof is given in Appendix B, and is based on initially following the max-min analysis of \cite{Bog17} to deduce an upper bound of $\frac{1}{T} \sum\limits_{t = 1}^T 2 \, U(T_{j_t}(t-1), \frac{\delta}{n})$, but then proceeding differently to further upper bound the right-hand side, in particular relying on \reflem{lem:inversion}.

To compare \refthm{thm:ub_so} with \refthm{thm:ub_se}, it helps to consider the following corollary.

\begin{corollary}
    Under the setup of \refthm{thm:ub_so}, if we additionally have that \refas{as:unique} holds and the gaps defined in \refsec{sec:setup} satisfy $\Delta_j \ge \Delta_{\min}$ for all $j=1,\dotsc,n$ and some $\Delta_{\min} > 0$.  Then, with probability at least $1 -O(\delta)$, the algorithm outputs $G^{(T)} = G^*$ provided that $T \ge \Omega^*\big( \frac{n\log\frac{n}{\delta}}{\Delta_{\min}^2} \big)$ where $\Omega^*(\cdot)$ hides $\log\log(\cdot)$ factors in the argument.
\end{corollary}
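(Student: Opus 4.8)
The plan is to combine the instance-independent regret bound of \refthm{thm:ub_so} with the observation that, under \refas{as:unique}, any incorrect output must incur a simple regret bounded below by the fixed termination gap $\Delta_0$. First I would translate the event $G^{(T)} \neq G^*$ into a lower bound on the regret: by the definition $\Delta_G = \mu_{\jworst(G^*)} - \mu_{\jworst(G)}$ and $\Delta_0 = \min_{G \neq G^*}\Delta_G$, whenever the returned group is suboptimal we have $r(G^{(T)}) = \Delta_{G^{(T)}} \ge \Delta_0 > 0$. Contrapositively, it suffices to drive the regret strictly below $\Delta_0$ in order to force $G^{(T)} = G^*$.

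Second, I would connect the hypothesis $\Delta_j \ge \Delta_{\min}$ to the relevant gap $\Delta_0$. The key identity is that the worst arm $j^\star = \jworst(G^*)$ of the optimal group satisfies $\Delta_{j^\star}' = 0$ (since $j^\star$ is a worst arm of $G^*$, the group $G^*$ attains the minimum) and $\Delta_{j^\star}'' = 0$ (since $\Delta_{G^*} = 0$ and all $\Delta_G \ge 0$ by optimality of $G^*$). Hence \refeq{def:gap} gives $\Delta_{j^\star} = \max\{0, 0, \Delta_0\} = \Delta_0$. Applying the assumption $\Delta_j \ge \Delta_{\min}$ to $j = j^\star$ therefore yields $\Delta_0 \ge \Delta_{\min}$, so ensuring $r(G^{(T)}) < \Delta_{\min}$ already suffices to guarantee $G^{(T)} = G^*$.

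Third, I would invoke \refthm{thm:ub_so}: on the high-probability event (of probability $1 - O(\delta)$), $r(G^{(T)}) \le O\big(\sqrt{n/T}\,(\sqrt{\log(n/\delta)} + \log\log T)\big)$. Setting the right-hand side below $\Delta_{\min}$ and solving for $T$ gives a requirement of the form $T \gtrsim \frac{n}{\Delta_{\min}^2}\big(\sqrt{\log(n/\delta)} + \log\log T\big)^2$, whose dominant term is $T \ge \Omega\big(\frac{n\log(n/\delta)}{\Delta_{\min}^2}\big)$.

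The only genuinely delicate point is the self-referential $\log\log T$ term, which appears on both sides of the inequality. I would handle it by a standard bootstrapping argument: substitute a candidate value $T = C\,n\log(n/\delta)/\Delta_{\min}^2$ back into the bound and verify that, because $\log\log T$ grows more slowly than any positive power of $T$, the inequality continues to hold for a suitable absolute constant $C$. This slack is exactly what the $\Omega^*(\cdot)$ notation, which suppresses $\log\log(\cdot)$ factors, is designed to absorb. No single step is a serious obstacle; the conceptual crux is the gap identity $\Delta_{j^\star} = \Delta_0$, which is what converts the per-arm hypothesis into a lower bound on the termination gap.
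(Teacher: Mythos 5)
Your proposal is correct and follows the route the paper intends (the corollary is stated without an explicit proof, but it is meant to follow directly from \refthm{thm:ub_so} in exactly this way): the identity $\Delta_{j_{\mathrm{worst}}(G^*)}=\Delta_0$ converts the hypothesis $\Delta_j\ge\Delta_{\min}$ into $\Delta_0\ge\Delta_{\min}$, any suboptimal output incurs regret at least $\Delta_0$, and inverting the regret bound with the $\log\log T$ term absorbed into $\Omega^*(\cdot)$ gives the stated sample complexity. No gaps.
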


This result matches the scaling in \refthm{thm:ub_se} whenever $\Delta_j = \Delta_{\min}$ for all $j$, which can be viewed as a minimax worst-case instance.  Moreover, a standard reduction to finding a biased coin (e.g., \cite{Man04}) reveals that any algorithm must use the preceding number of arm pulls (or more) on worst-case instances, at least up to the replacement of $\log\frac{n}{\delta}$ by $\log\frac{1}{\delta}$; hence, there is no significant room for improvement in the minimax sense.

On the other hand, it is also of interest to understand instances that are not of the worst-case kind, in which case the number of arm pulls given in \refthm{thm:ub_se} can be significantly smaller.  We expect that {\scshape StableOpt} also enjoys instance-dependent guarantees (see \refsec{sec:exp} for some numerical evidence), though \refthm{thm:ub_so} does not show it.

\section{Algorithm-Independent Lower Bound} \label{sec:lower}
    
\subsection{Preliminaries} 

We follow the high-level approach of \cite{kaufmann2016complexity}, and make use of the following assumption adopted therein.

\begin{assumption} \label{as:lower}
    The reward distribution $P_j$ for any arm $j$ belongs to family $\cal P$ of distributions parametrized by their mean $\mu_j \in (0, 1)$. Any two distributions $P_j, P_{j'} \in {\cal P}$ are mutually absolutely continuous, and $D(P_j\|P_{j'}) \to 0$ in the limit as $\mu_{j'}$ approaches $\mu_j$.
\end{assumption}

The following assumption is not necessary for the bulk of our analysis, but will allow us to state our results in a cleaner form that can be compared directly to the upper bounds.

\begin{assumption} \label{as:quadratic}
    There exists a constant $\tilde{C} > 0$ such that, for any arm distributions $P_j$ and $P_{j'}$ having corresponding means $\mu_j$ and $\mu_{j'}$, it holds that $D(P_j\|P_{j'}) \le \tilde{C} (\mu_j - \mu_{j'})^2$.
\end{assumption}

As is well-known from previous works, the above assumptions are satisfied in the case of Gaussian rewards, and also Bernoulli rewards under the additional requirement of means strictly bounded away from zero and one (e.g., $\mu_j \in (0.01,0.99)$ for all $j$).

We use the widely-adopted approach of considering a base instance, and perturbing the arm means (ideally only a small amount) to form a different instance with a different optimal group; see Lemma 4 in the appendix.  An additional technical challenge here is that even if $\cal A$ is identifiable (i.e., satisfies \refas{as:unique}), it can easily happen that the perturbed instance is non-identifiable due to the new max-min arm appearing in multiple groups.  To circumvent this issue, we introduce the following definition for the algorithm's success.

\begin{defn}
    We say that a max-min grouped bandit algorithm is {\em uniformly $\delta$-successful} with respect to a class of instances if it satisfies the following:
    \begin{itemize}
        \item For any identifiable instance in the class, the algorithm almost surely terminates, and returns the max-min optimal group (i.e., $G^*$) with probability at least $1-\delta$.
        \item For any non-identifiable instance in the class, the algorithm may or may not terminate, but has a probability at most $\delta$ of returning a group that is not max-min optimal.
    \end{itemize}
\end{defn}

We note that successive elimination in \refsec{sec:se} satisfies these conditions; in the non-identifiable case, as long as the confidence bounds are valid, the algorithm never terminates.

\subsection{Statement of Result}  

In the following, we let $N_j$ denote the total number of times arm $j$ is pulled.

\begin{thm} \label{thm:lower}
{\em (Algorithm-Independent Lower Bound)}
Consider any algorithm that is uniformly $\delta$-successful with respect to the instances satisfying \refas{as:noise}, \refas{as:lower}, and \refas{as:quadratic}.  Fix any identifiable instance ${\cal A} = (a_1,...,a_n)$ with distributions $(P_1, ..., P_n)$ and a specified grouping ${\cal G} = (G_1, ..., G_m)$.  Then, when the algorithm is run on instance $\cal A$, we have the following:
\begin{itemize}
    \item For each $j \in G^*$, the average number of pulls satisfies 
    \begin{align}
        \mathbb{E}[N_j] \geq \frac{\log\frac{1}{2.4\delta}}{\tilde{C} (\Delta'_j + \Delta_0)^2}, \label{eq:lower1}
    \end{align}
    where $\tilde{C}$ appears in \refas{as:quadratic}, and $\Delta'_j$ and $\Delta_0$ are defined in \refsec{sec:assump}.
    \item For each $G \ne G^*$, we have
    \begin{align}
        &\sum\limits_{j \in G \,:\, \mu_j < \mu_{\mathrm{worst}}(G^*)} \mathbb{E}[N_j(\sigma)] \cdot \tilde{C}(\mu_{j_{\rm worst}(G^*)} - \mu_j)^2 \nonumber \\ & \hspace*{5cm} \geq  \log\frac{1}{2.4\delta}. \label{eq:lower2}
    \end{align}
\end{itemize}
\end{thm}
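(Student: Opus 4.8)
The plan is to follow the change-of-measure approach of \cite{kaufmann2016complexity}, invoking the transportation inequality \reflem{lem:kaufmann}: for the (identifiable) instance $\nu$ with almost surely finite stopping time $\tau$, any alternative $\nu'$ differing from $\nu$ only on a set $S$ of arms, and any event $\mathcal{E} \in \mathcal{F}_\tau$,
\[
\sum_{k \in S} \mathbb{E}_\nu[N_k]\, D(P_k \| P_k') \;\ge\; \mathrm{kl}\big(\mathbb{P}_\nu[\mathcal{E}], \mathbb{P}_{\nu'}[\mathcal{E}]\big),
\]
where $\mathrm{kl}$ denotes the binary relative entropy. Throughout I would take $\mathcal{E} = \{G^{(\tau)} = G^*\}$. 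Since $\nu$ is identifiable with optimum $G^*$, uniform $\delta$-success gives $\mathbb{P}_\nu[\mathcal{E}] \ge 1-\delta$; and whenever $\nu'$ is engineered so that $G^*$ is \emph{strictly} max-min suboptimal, uniform $\delta$-success (covering both the identifiable and non-identifiable cases in the definition) forces $\mathbb{P}_{\nu'}[\mathcal{E}] \le \delta$. Monotonicity of $\mathrm{kl}$ then yields $\mathrm{kl}(\mathbb{P}_\nu[\mathcal{E}],\mathbb{P}_{\nu'}[\mathcal{E}]) \ge \mathrm{kl}(1-\delta,\delta) \ge \log\frac{1}{2.4\delta}$, the last inequality being the standard bound from \cite{kaufmann2016complexity}.

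For \refeq{eq:lower1}, fix $j \in G^*$ and construct $\nu'$ by lowering \emph{only} $\mu_j$ to a value $\mu_j'$ slightly below $\mu_{j_{\mathrm{worst}}(G^*)} - \Delta_0$, leaving all other arms untouched. The structural fact I would lean on is that for $j \in G^*$ one has $\Delta'_j = \mu_j - \mu_{j_{\mathrm{worst}}(G^*)}$: since $G^*$ attains the global maximum of worst-arm means and contains $j$, it is the group containing $j$ with the largest worst-arm mean, i.e.\ exactly the group realizing the minimum defining $\Delta'_j$. Hence the perturbation magnitude $\mu_j - \mu_j'$ can be taken arbitrarily close to $(\mu_j - \mu_{j_{\mathrm{worst}}(G^*)}) + \Delta_0 = \Delta'_j + \Delta_0$. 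With this choice the worst arm of $G^*$ becomes $j$ with mean strictly below the best competing group's value $\mu_{j_{\mathrm{worst}}(G^*)} - \Delta_0$, so $G^*$ is strictly suboptimal under $\nu'$ (lowering $\mu_j$ only decreases other groups' worst-arm means, so none overtakes the competitor). Applying the single-arm inequality, bounding $D(P_j\|P_j') \le \tilde{C}(\mu_j - \mu_j')^2$ via \refas{as:quadratic}, and letting $\mu_j' \uparrow \mu_{j_{\mathrm{worst}}(G^*)} - \Delta_0$ then gives \refeq{eq:lower1}.

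For \refeq{eq:lower2}, fix $G \ne G^*$ and construct $\nu'$ by raising \emph{every} arm $j \in G$ with $\mu_j < \mu_{j_{\mathrm{worst}}(G^*)}$ to a value just above $\mu_{j_{\mathrm{worst}}(G^*)}$, leaving all remaining arms (including those of $G^*$) unchanged. Then $\min_{j \in G}\mu_j' > \mu_{j_{\mathrm{worst}}(G^*)}$ while every other group's worst-arm mean is fixed, so $G$ becomes strictly optimal and $G^*$ strictly suboptimal, forcing $\mathbb{P}_{\nu'}[\mathcal{E}] \le \delta$. Here the perturbed set $S = \{j \in G : \mu_j < \mu_{j_{\mathrm{worst}}(G^*)}\}$ contains several arms, so the inequality produces precisely the sum on the left of \refeq{eq:lower2}; bounding each $D(P_j\|P_j') \le \tilde{C}(\mu_j' - \mu_j)^2$ and letting each $\mu_j' \downarrow \mu_{j_{\mathrm{worst}}(G^*)}$ yields the claim.

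The hard part is conceptual rather than computational: the alternative instances need not be identifiable (the newly created max-min arm may lie in several groups), so the classical argument comparing two instances with distinct \emph{unique} optima does not directly apply. The resolution is exactly the \emph{uniformly $\delta$-successful} notion, which guarantees $\mathbb{P}_{\nu'}[\mathcal{E}] \le \delta$ as soon as $G^*$ is strictly suboptimal, irrespective of whether $\nu'$ has a unique optimum or even terminates. Secondary care is needed to keep all perturbed means inside $(0,1)$ as required by \refas{as:lower}, and to justify the limiting passage converting the inequalities $D(P_j\|P_j') \le \tilde{C}(\cdot)^2$ into the stated squared-gap denominators.
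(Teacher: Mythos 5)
Your proposal is correct and follows essentially the same route as the paper's proof: the same change-of-measure lemma applied to the event of outputting $G^*$, the same single-arm downward shift by (just over) $\Delta'_j + \Delta_0$ for $j \in G^*$, the same simultaneous upward shift of all arms of a suboptimal group $G$ lying below $\mu_{j_{\rm worst}(G^*)}$, and the same quadratic KL bound followed by a limiting argument. Your explicit justification that $\Delta'_j = \mu_j - \mu_{j_{\rm worst}(G^*)}$ for $j \in G^*$, and your use of uniform $\delta$-success to handle possibly non-identifiable perturbed instances, are points the paper makes only implicitly or in passing.
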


The proof is given in Appendix C, and is based on shifting the given instance to create a new instance with a different optimal group, and then quantifying the number of arm pulls required to distinguish the two instances.  This turns out to be straightforward when $j \in G^*$, but less standard (requiring multiple arms to be shifted) when $j \notin G^*$.

While \refthm{thm:lower} does not directly state a lower bound on the total number of pulls, we can perform some further steps to deduce such a bound depending on the number of groups-per-arm (which could alternatively be upper bounded trivially by $|{\cal G}|$), stated as follows and proved in Appendix C.

\begin{corollary} \label{cor:lower2}
{\em (Simplified Algorithm-Independent Lower Bound)}
Consider the setup of \refthm{thm:lower}, and suppose that there exists an integer $m > 0$ such that every arm appears in at most $m$ groups.  Then, the expected number of arm pulls is lower bounded by 
\begin{align}
    T_{\rm lower}(\delta) = \sum\limits_{j \in G^*} \frac{\log\frac{1}{2.4\delta}}{\tilde{C} (\Delta'_j + \Delta_0)^2} + \frac{1}{m} \sum\limits_{G \in {\cal G} \setminus \{G^*\} }  \frac{\log\frac{1}{2.4\delta}}{ \tilde{C} \Delta_{G}^2}, \label{eq:T_lower}
\end{align}
where $\Delta_G = \mu_{j_{\mathrm{worst}}(G^*)} - \mu_{j_{\mathrm{worst}}(G)}$.
\end{corollary}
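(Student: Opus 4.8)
The plan is to obtain the total-pull lower bound by combining the two parts of \refthm{thm:lower} over \emph{disjoint} sets of arms, using the at-most-$m$-groups-per-arm hypothesis only to handle the overlap hidden in the second part. The first summand of $T_{\rm lower}(\delta)$ is immediate: summing the per-arm bound \refeq{eq:lower1} over all $j \in G^*$ gives $\sum_{j \in G^*}\mathbb{E}[N_j] \ge \sum_{j \in G^*} \frac{\log\frac{1}{2.4\delta}}{\tilde{C}(\Delta'_j + \Delta_0)^2}$. The substance of the proof is converting the group-level constraint \refeq{eq:lower2} into a lower bound on the number of pulls of arms lying strictly below $\mu_{j_{\mathrm{worst}}(G^*)}$.

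First I would simplify \refeq{eq:lower2} for each fixed $G \ne G^*$. For every $j \in G$ we have $\mu_j \ge \mu_{j_{\mathrm{worst}}(G)}$ by definition of the worst arm, hence $\mu_{j_{\mathrm{worst}}(G^*)} - \mu_j \le \mu_{j_{\mathrm{worst}}(G^*)} - \mu_{j_{\mathrm{worst}}(G)} = \Delta_G$. Since the left side is positive for exactly the arms retained in \refeq{eq:lower2} (those with $\mu_j < \mu_{j_{\mathrm{worst}}(G^*)}$), squaring preserves the inequality, so $\tilde{C}(\mu_{j_{\mathrm{worst}}(G^*)} - \mu_j)^2 \le \tilde{C}\Delta_G^2$ on every retained term. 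Inserting this uniform bound and pulling the constant outside the sum yields
\[
\sum_{j \in G \,:\, \mu_j < \mu_{j_{\mathrm{worst}}(G^*)}} \mathbb{E}[N_j] \;\ge\; \frac{\log\frac{1}{2.4\delta}}{\tilde{C}\,\Delta_G^2}.
\]

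Next I would aggregate. Summing the last display over all $G \ne G^*$ and exchanging the order of summation rewrites the left side as $\sum_{j} c_j\,\mathbb{E}[N_j]$, where $c_j$ is the number of groups $G \ne G^*$ containing $j$ with $\mu_j < \mu_{j_{\mathrm{worst}}(G^*)}$; arms with $\mu_j \ge \mu_{j_{\mathrm{worst}}(G^*)}$ contribute $c_j = 0$. By hypothesis each arm lies in at most $m$ groups, so $c_j \le m$, giving $\sum_{G \ne G^*}\frac{\log\frac{1}{2.4\delta}}{\tilde{C}\Delta_G^2} \le m\sum_{j \,:\, \mu_j < \mu_{j_{\mathrm{worst}}(G^*)}}\mathbb{E}[N_j]$, i.e.\ the second summand after dividing by $m$. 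The crucial structural observation that makes the two contributions additive is disjointness: every $j \in G^*$ satisfies $\mu_j \ge \mu_{j_{\mathrm{worst}}(G^*)}$, so no arm of $G^*$ is ever counted in the strict-inequality sums. Since $\{j \in G^*\}$ and $\{j : \mu_j < \mu_{j_{\mathrm{worst}}(G^*)}\}$ are therefore disjoint subsets of $\{1,\dots,n\}$, we conclude $\mathbb{E}[T] = \sum_{j=1}^n \mathbb{E}[N_j] \ge \sum_{j \in G^*}\mathbb{E}[N_j] + \sum_{j \,:\, \mu_j < \mu_{j_{\mathrm{worst}}(G^*)}}\mathbb{E}[N_j] \ge T_{\rm lower}(\delta)$.

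The main obstacle is the bookkeeping around overlapping groups: an arm can simultaneously be a sub-worst arm in several suboptimal groups, so naively summing \refeq{eq:lower2} over $G$ overcounts its pulls, and the at-most-$m$ hypothesis is precisely what pays for this overcount, at the cost of the $\frac{1}{m}$ factor. The two points I would verify most carefully are that the relaxation $(\mu_{j_{\mathrm{worst}}(G^*)} - \mu_j)^2 \le \Delta_G^2$ is valid on \emph{all} terms kept in \refeq{eq:lower2}, and that the disjointness between $G^*$ and the sub-worst arms genuinely rules out any double counting when the two bounds are added.
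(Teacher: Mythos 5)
Your proof is correct and follows essentially the same route as the paper: sum \refeq{eq:lower1} over $j \in G^*$ for the first term, relax each gap in \refeq{eq:lower2} to $\Delta_G$ to get a per-group bound of $\frac{\log\frac{1}{2.4\delta}}{\tilde{C}\Delta_G^2}$, and pay a factor $\frac{1}{m}$ for the overcounting caused by overlapping groups. Your explicit verification that the index sets $\{j \in G^*\}$ and $\{j : \mu_j < \mu_{j_{\mathrm{worst}}(G^*)}\}$ are disjoint (so the two contributions genuinely add) is a point the paper leaves implicit, and is worth having spelled out.
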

 
In the following section, we discuss the strengths and weaknesses of our bounds in detail.

\section{Discussion} \label{sec:discussion}

\paragraph{Cases with near-matching behavior.} We first note that for $j \in G^*$, the number of pulls {\em of that particular arm} dictated by our upper and lower bounds are near-matching.  This is because any $j \in G^*$ has $\Delta''_j = 0$ and hence $\Delta_j = \max\{\Delta'_j, \Delta_0\}$, which matches $\Delta'_j + \Delta_0$ (see the lower bound) to within a factor of two.

Regarding $j \notin G^*$, it is useful to note that $\Delta''_j = \min_{G \,:\, j \in G} \Delta_G$, and $\Delta_G$ appears in \refcor{cor:lower2}.  Hence, the gaps $\Delta_G$ between worst arms play a fundamental role in both the upper and lower bounds.  However, near-matching behavior is not necessarily observed, as we discuss below.  

As an initial positive case, under the trivial grouping $\Gc = \{ \{1\}, \{2\}, \dotsc, \{n\} \}$, our bounds reduce to near-tight bounds for the standard best-arm identification problem \cite{jamieson2014best,kaufmann2016complexity}, with $\sum_{j=1}^n \frac{1}{\Delta_j^2}$ dependence on the gaps $\{\Delta_j\}$ between suboptimal arms and the optimal arm.

More generally, our upper and lower bounds have near-matching dependencies when both the number of items-per-group and groups-per-item are bounded, say by some absolute constant.  In this case, the second term in \refeq{eq:T_lower} is dictated by $\sum_{G \ne G^*} \frac{1}{\Delta_G^2}$ (since $m$ is bounded), and we claim that the same is true for the contribution of $j \notin G^*$ in the upper bound.  To see this, first note that within each group, the arm with the smallest $\Delta_j$ is the one with the smallest $\Delta'_j $ (the other two quantities $\Delta''_j$ and $\Delta_0$ do not vary within $G$), which is $\jworst(G)$ (having $\Delta'_j = 0$).  Thus, $j = \jworst(G)$ incurs the most pulls in $G$, and has $\Delta_j = \max\{\Delta''_j,\Delta_0\}$.  The definition of $\Delta_0$ combined with $j = \jworst(G)$ imply that this simplifies to $\Delta_j = \Delta_G$.  When we have bounded items-per-group and groups-per-item, it follows that $\sum_{j \notin G^*} \frac{1}{\Delta_j^2}$ reduces to $\sum_{G \ne G^*} \frac{1}{\Delta_G^2}$ up to constant factors, as desired.

\paragraph{Cases where the bounds are not tight.} Perhaps most notably, the lower bound only dictates a minimum {\em total} number of pulls for arms in a given group $G \ne G^*$, whereas the upper bound is based on each {\em individual} arm being pulled enough.  It turns out that we can identify weaknesses in both of these, and it is likely that neither bound can consistently be identified as ``tighter'' than the other.

To see a potential weakness in the upper bound in \refthm{thm:ub_se}, consider an instance with $|{\cal G}| = 2$ and only a single arm $j^*$ in the optimal group $G^*$, and a large number of arms in the suboptimal group. For the arms in the suboptimal group $G_2$, suppose that half of them have a mean reward significantly above that of $j^*$, and the other half have a significantly smaller mean reward.  In this case, it is feasible to quickly identify $G_2$ as suboptimal by randomly selecting a small number of arms (namely, $O\big(\log\frac{1}{\delta}\big)$ of them if we require an error probability of at most $\delta$) and sampling them a relatively small number of times.  Hence, it is not necessary to sample every arm in $G_2$.  On the other hand, our proposed elimination algorithm immediately starts by pulling every arm, which may be highly suboptimal if $|G_2|$ is very large. 


In contrast, the main looseness is clearly in the lower bound if we modify the above example so that $G_2$ only has {\em one} low-mean arm.  In this case, the total number of arm pulls should clearly increase linearly with the number of arms, but the lower bound in \refthm{thm:lower} does not capture this fact; it only states that both $j^*$ and the low-mean arm in $G_2$ should be pulled sufficiently many times.

\paragraph{Difficulties in obtaining uniformly tight bounds} The examples above indicate that the general max-min grouped bandit problem is connected to problem of {\em good-arm identification} \cite{Kat20}, and that improved algorithms might randomly select subsets of arms within the groups (possibly starting with a small subset and expanding it when further information is needed).  In fact, in the examples we gave, if the mean reward of $j^*$ were to be revealed to the algorithm, the remaining task of determining whether $G_2$ contains an arm with a mean below that of $j^*$ would be exactly equivalent to the problem studied in \cite{Kat20}.  Even this sub-problem required a lengthy and highly technical analysis in \cite{Kat20}, and the difficulty appears to compound further when there are multiple non-overlapping groups, and even further when overlap is introduced.  Thus, we believe that the development of near-matching upper and lower bounds is likely to be challenging in general.

\paragraph{Discussion on identifiability assumptions} Recall from \refas{as:unique} that we assume $G^*$ to be uniquely defined.  In contrast, we do not require a unique worst arm within {\em every} group; multiple such arms only amounts to more ways in which the suboptimal group can be identified as such.

The unique optimal group assumption could be removed by introducing a tolerance parameter $\epsilon$, and only requiring to identify a group whose worst arm mean is within $\epsilon$ of the highest possible.  In this case, any gap values $\Delta_j$ that are below $\epsilon$ get capped to $\epsilon$ in the upper bound in \refthm{thm:ub_se}.  Our lower bounds can also be adapted accordingly.   The changes in the analysis are entirely analogous to similar findings in the standard best-arm identification problem (e.g., see \cite{gabillon2012unified}), so we do not go into detail.


\section{Experiments} \label{sec:exp}

In this section, we present some basic experimental results comparing the algorithms considered in the previous sections.


\begin{figure}
    \centering
    \includegraphics[width=0.42\textwidth]{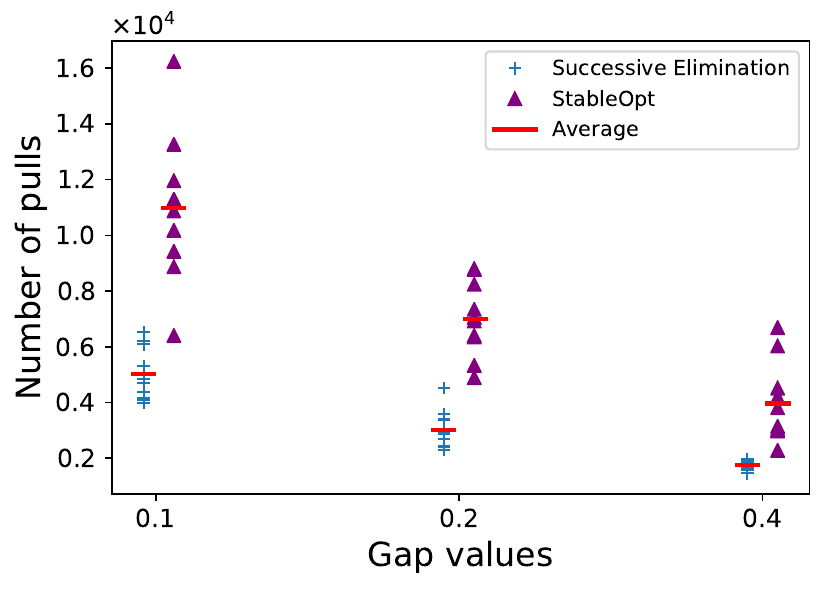} 
    \caption{Plot of total arm pulls used for Successive Elimination and {\scshape StableOpt} for $\Delta \in \{0.1, 0.2, 0.4\}$.}\label{fig:gap_comp}
    \vspace*{-1.5ex}
\end{figure}

\begin{figure}
    \centering
    \includegraphics[width=0.42\textwidth]{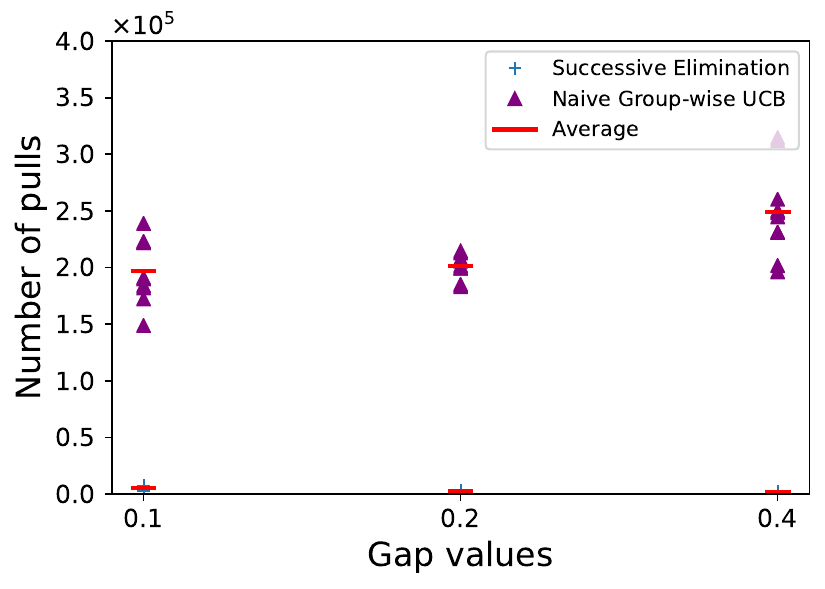} 
    \caption{Comparison of Successive Elimination and the naive group-wise strategy for $\Delta \in \{0.1, 0.2, 0.4\}$.}
    \label{fig:naive_non_overlap}
    \vspace*{-1.5ex}
\end{figure}

\begin{figure}
    \centering
    \includegraphics[width=0.4\textwidth]{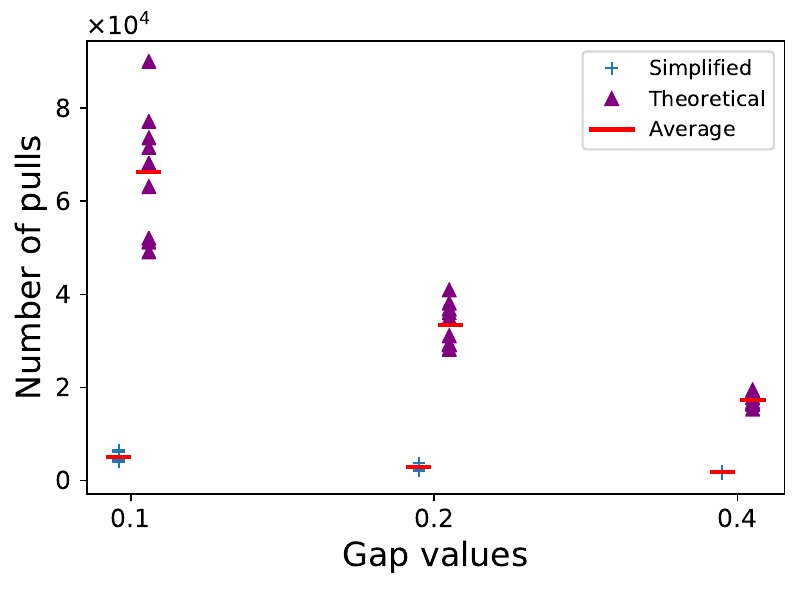} 
    \caption{Comparison of theoretical and simplified choices of confidence bounds.} 		
    \label{fig:bound_comp}
\end{figure}

\begin{figure*}
    \centering
    \subfloat[$\Delta=0.1$]{
        \includegraphics[width=0.33\textwidth]{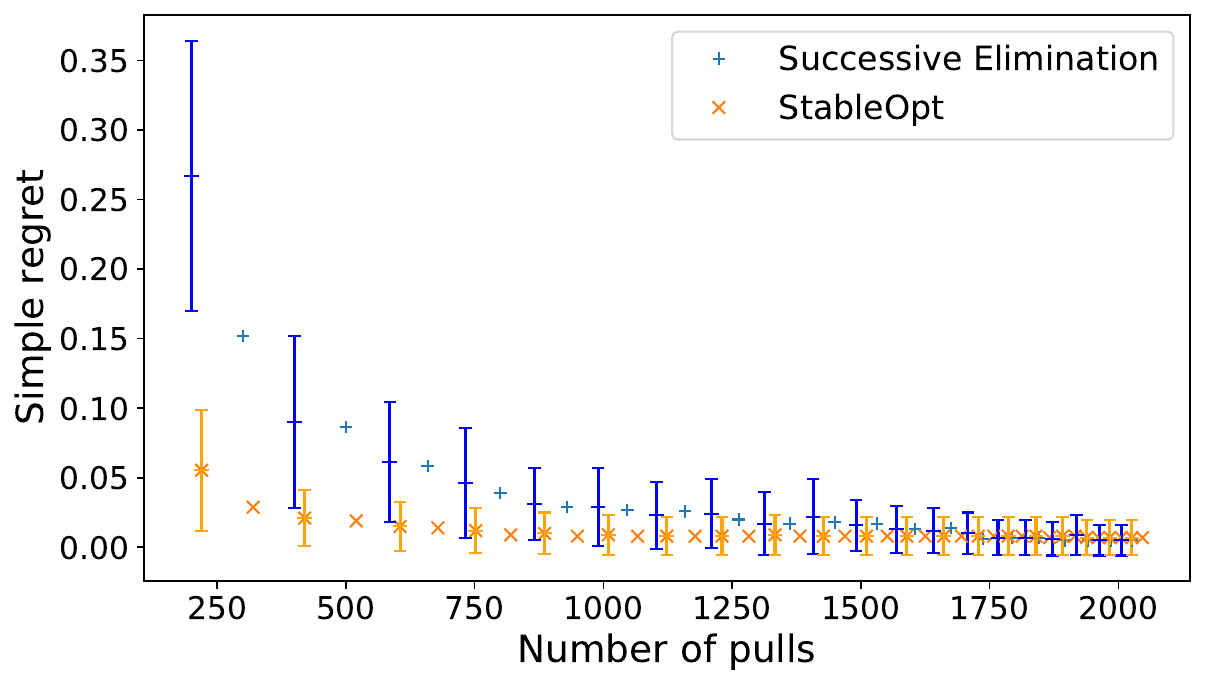}}
    \subfloat[$\Delta=0.2$]{
        \includegraphics[width=0.33\textwidth]{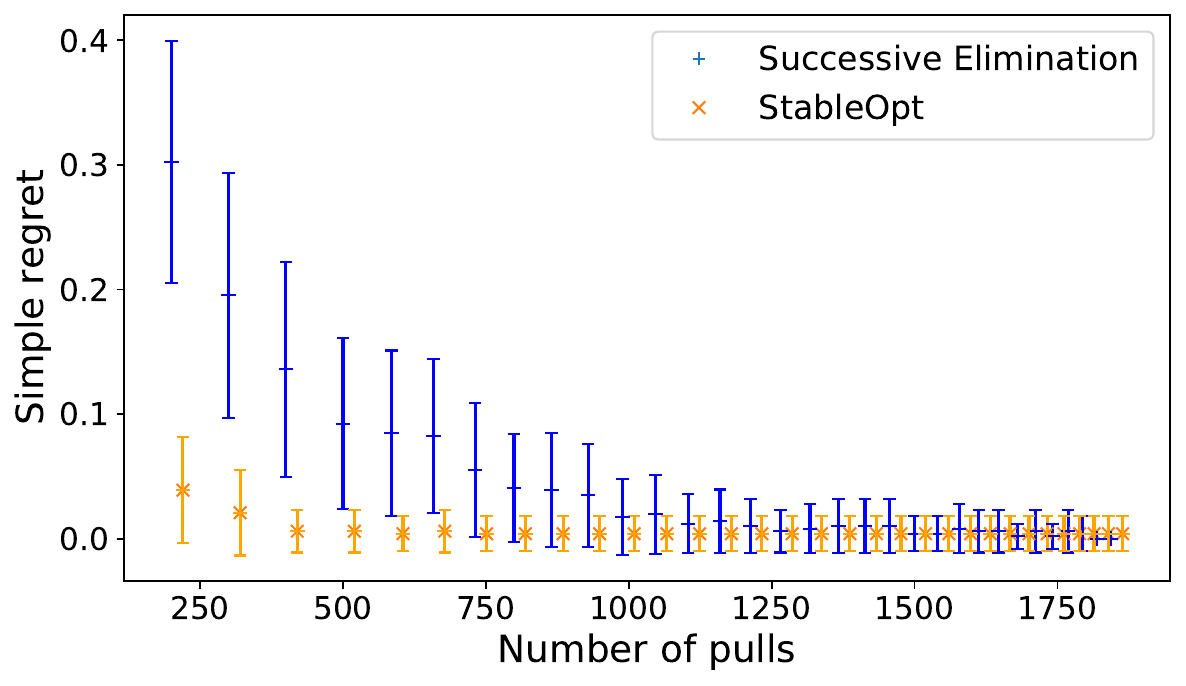}}
    \subfloat[$\Delta=0.4$]{
        \includegraphics[width=0.33\textwidth]{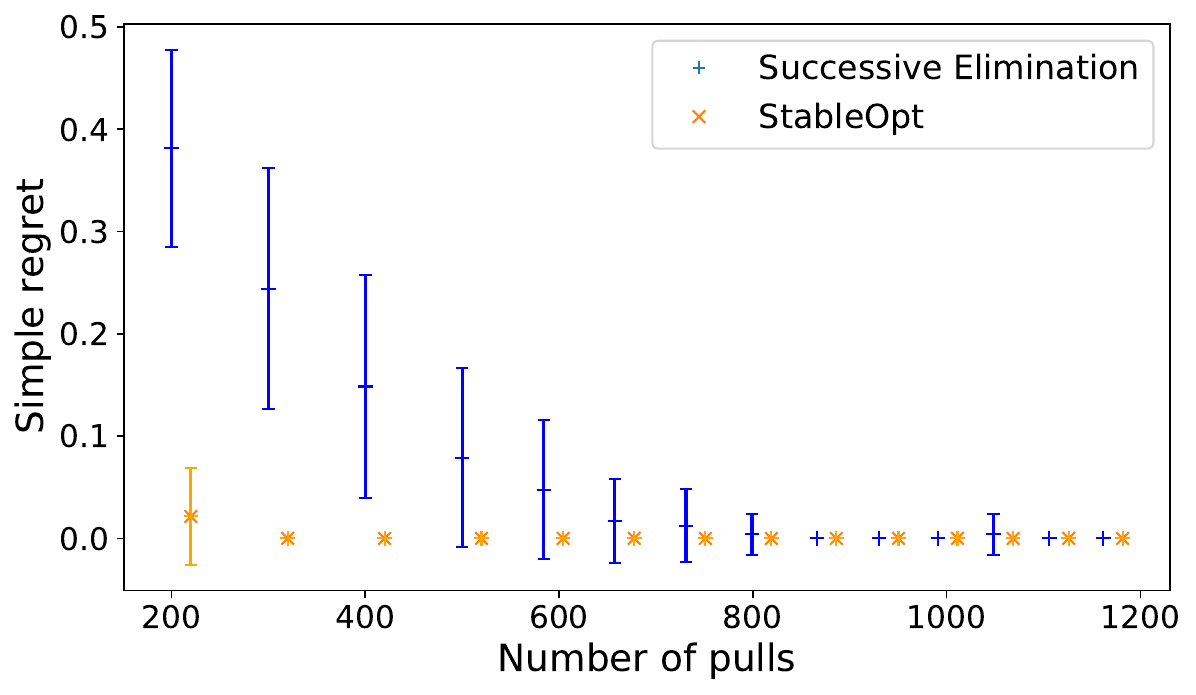}}
    \caption{Simple regret plots with various gap values.} 
    \label{fig:crit_comp}
\end{figure*}

\subsection{Experimental Setup} \label{sec:exp_setup}

In each experiment, we generate 10 instances, each containing 100 arms and 10 possibly overlapping groups.  The arm rewards are Bernoulli distributed, and the instances are generated to ensure a pre-specified minimum gap value ($\Delta$) as per \refeq{def:gap}, and we consider $\Delta \in \{0,1.0.2,0.4\}$.   The precise details of the groupings and arm means are given below.   Empirical error rates (or simple regret values) are computed by performing 10 trials per instance, for a total of 100 trials.




{\bf Details of groups and rewards.} We allocate each arm into each group independently with probability $1/10$, so that each group contains 10 arms on average. For any subsequently non-allocated arms, we assign it to a uniformly random group. In addition, for any empty group, we assign 10 uniformly random arms into it. 

We arbitrary choose the first group to be the optimal one, and set its worst arm $j_0$ to have a mean reward of $0.5$ (here $j_0$ is chosen to ensure that $G^*$ is unique). We further choose the second group to be a suboptimal group whose worst arm $j_1$ meets the gap value exactly, i.e.  $\mu_{j_0} - \mu_{j_1} = \Delta$. Then, we generate other groups' worst arms to be uniform in $[0,\mu_{j_1}]$. Finally, we choose the means for the remaining arms to be uniform in $[\mu_{j_G},1]$, where $j_G$ is the relevant worst arm in the relevant group $G$.

{\bf Confidence bounds.} Both Successive Elimination and {\scshape StableOpt} rely on the confidence bounds \refeq{def:UCB}--\refeq{def:LCB}.  These bounds are primarily adopted for theoretical convenience, so in our experiments we adopt the more practical choice of $\hat{\mu}_{j, T_j(t)} \pm \frac{c}{\sqrt{T_j(t)}}$ with $c = 1$.  Different choices of $c$ are explored in Appendix E.

{\bf Stopping conditions.} Successive Elimination is defined with a stopping condition, but {\scshape StableOpt} is not.  A natural stopping condition for {\scshape StableOpt} is to stop when highest max-min LCB value exceeds all other groups' max-min UCB values.  However, this often requires an unreasonably large number of pulls, due to the existence of UCB values that are only slightly too low for the algorithm to pull based on.   We therefore relax this rule be only requiring it to hold to within a specified tolerance, denoted by $\eta$.  We set $\eta = 0.01$ by default, and explore other values in Appendix E.

We will also explore the notion of simple regret, and to do so, both algorithms require a method for declaring the {\em current best guess} of the max-min optimal group.  We choose to return the group with the best max-min LCB score, though the max-min empirical mean would also be reasonable.

\subsection{Results}

	 {\bf Effect of $\Delta$.} From \refig{fig:gap_comp}, we observe that the number of arm pulls decreases when the gap $\Delta$ increases, particularly for Successive Elimination.\footnote{Error probabilities are not shown, because there were no failures in any of the trials here.} This is intuitive, and consistent with our theoretical results.  These results also suggest that {\scshape StableOpt} can adapt to easier instances in the same way as Successive Elimination; obtaining theory to support this would be interesting for future work.
   
   	 {\bf Comparison to the Naive Approach.} We demonstrate that the simple group-wise approach is indeed suboptimal by comparing its empirical performance with Successive Elimination.  Within each group, we identify the worst arm using the UCB algorithm with the same stopping rule as that of {\scshape StableOpt} described above, and among the arms identified, the one with the highest LCB score is returned.  \refig{fig:naive_non_overlap} supports our discussion in \refsec{sec:failure}, as we observe that this naive approach requires considerably more arm pulls, and does not appear to improve even as $\Delta$ increases. 
        
    {\bf Theoretical Confidence Bounds.} Here we compare our simplified choice of confidence width, $\frac{1}{\sqrt{T_j(t)}}$, to the theoretical choice in \refsec{sec:aux}.  The comparison is given in \refig{fig:bound_comp}, where we observe that the former requires fewer arm pulls and is less prone to runs with an unusually high number of pulls, suggesting that the theoretical choice may be overly conservative.  For both choices, there were no failures (i.e., returning the wrong group) in any of the runs performed.

     {\bf Simple Regret.} As seen above, the total number of pulls comes out to be fairly high for both algorithms.  This is due to stringent stopping conditions, and an investigation of the {\em average simple regret} reveals that the algorithms in fact learn much faster despite not yet terminating, especially for {\scshape StableOpt}, and especially when $\Delta$ is larger; see \refig{fig:crit_comp} (error bars show half a standard deviation).  These results again support the hypothesis that {\scshape StableOpt} naturally adapts to easier instances, though our theory only handles the instance-independent case.
     
     A possible reason why StableOpt attains smaller simple regret in \refig{fig:crit_comp} is that it more quickly steers towards the more promising groups, due to its method of selecting the group with the highest upper confidence bound.  In contrast, Successive Elimination always treats every non-eliminated arm equally, and the simple regret only decreases when eliminations occur.
     
      {\bf Effect of Confidence Width.} In Appendix E, we present further experiments exploring the theoretical choice of confidence width vs.~our practical choice of $\frac{c}{\sqrt{t_j}}$ with $c=1$, as well as considering difference choices of $c$ (and also the {\scshape StableOpt} stopping parameter $\eta$).

%

\section{Conclusion} \label{sec:conclusion}

We have introduced the problem of max-min grouped bandits, and studied the number of arm pulls for both an elimination-based algorithm and a variation of the StableOpt algorithm \cite{bogunovic2018adversarially}.  In addition, we provided an algorithm-independent lower bound, identified some of the potential weaknesses in the bounds, and discussed the difficulties in overcoming them.  

We believe that this work leaves open several interesting avenues for further work.  For instance:
\begin{itemize} 
    \item Following our discussion in \refsec{sec:discussion}, it would be of considerable interest to devise improved algorithms that do not necessarily pull every arm.
    \item Similarly, we expect that it should be possible to establish improved lower bounds that better capture certain difficulties, such as finding a single bad arm in a large group of otherwise good arms.
    \item Finally, one could move from the max-min setting to more general settings in which a single bad arm does not necessarily make the whole group bad, e.g., by considering a given quantile within the group.
\end{itemize}


    

\appendix

%


\section{Proof of \refthm{thm:ub_se} (Upper Bound for Successive Elimination)} \label{sec:pfs_se}

We first formally state the correctness of the algorithm.

\begin{lemma} \label{lem:conv_SE}
    {\em (Correctness of SE)}
    Suppose that \refas{as:noise} and \refas{as:unique} hold. Given $\epsilon \in (0,1)$ and $\delta \in (0, \frac{1}{e} \log(1+\epsilon))$, with probability at least $ 1 - \frac{2 + \epsilon}{\epsilon/2}(\frac{\delta}{\log(1+\epsilon)})^{1+\epsilon}$, we have that \refalg{alg:elimination} returns $\hat{\cal C} = \{G^*\}$.
\end{lemma}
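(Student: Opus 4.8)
The plan is to condition on the single ``good event'' $\mathcal{E}$ that the confidence bounds hold simultaneously for all arms and all times, i.e.\ $\LCB_t(j) \le \mu_j \le \UCB_t(j)$ for every $j$ and every $t \ge 1$. By \reflem{lem:conf} applied with $\delta$ replaced by $\delta/n$, together with a union bound over the $n$ arms, $\mathcal{E}$ occurs with probability at least $1 - \frac{2+\epsilon}{\epsilon/2}\big(\frac{\delta}{\log(1+\epsilon)}\big)^{1+\epsilon}$; on $\mathcal{E}$ the remainder of the argument is deterministic. The first step is to record two ``sandwich'' facts valid on $\mathcal{E}$ for every group $G$ and epoch $i$. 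One checks that $\jworst(G) \in m_i^{(G)}$ always, since $\LCB_{t_i}(\jworst(G)) \le \mu_{\jworst(G)} \le \mu_{j'} \le \UCB_{t_i}(j')$ for all $j' \in G$, so its LCB does not exceed $\min_{j' \in G}\UCB_{t_i}(j')$. Using $\LCB \le \mu$ at this worst arm then gives $\min_{j' \in m_i^{(G)}} \LCB_{t_i}(j') \le \mu_{\jworst(G)}$, while evaluating $\UCB \ge \mu$ at the minimizer of $\UCB$ over $m_i^{(G)}$ gives $\min_{j \in m_i^{(G)}} \UCB_{t_i}(j) \ge \mu_{\jworst(G)}$.

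Next I would show that $G^*$ is never eliminated. Suppose $G^* \in {\cal C}_i$; for any $G' \in {\cal C}_i$, the two sandwich facts together with the optimality of $G^*$ yield
\[
\min_{j' \in m_i^{(G')}} \LCB_{t_i}(j') \le \mu_{\jworst(G')} \le \mu_{\jworst(G^*)} \le \min_{j \in m_i^{(G^*)}} \UCB_{t_i}(j),
\]
which is precisely the condition \refeq{def:C_i} for $G^*$ to remain in ${\cal C}_{i+1}$. Since $G^* \in {\cal C}_0 = {\cal G}$, induction gives $G^* \in {\cal C}_i$ for all $i$.

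It remains to show the algorithm terminates, necessarily with ${\cal C}_i = \{G^*\}$ by the previous step. The sets ${\cal C}_i$ are nested and decreasing, so if the algorithm never stopped, then $|{\cal C}_i|\ge 2$ would eventually stabilise at a fixed collection ${\cal C}_\infty \ni G^*$ containing some suboptimal group $G$. From that epoch on, $\jworst(G)$ and $\jworst(G^*)$ lie in ${\cal A}_i$ at every epoch (they are potential worst arms of candidate groups), hence are pulled unboundedly and their confidence widths vanish. I would then bound $\min_{j \in m_i^{(G)}} \UCB_{t_i}(j) \le \UCB_{t_i}(\jworst(G)) \le \mu_{\jworst(G)} + 2U(T_{\jworst(G)}(t_i),\delta/n) \to \mu_{\jworst(G)}$, and lower bound the $G^*$ side (see below). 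Since $\mu_{\jworst(G^*)} > \mu_{\jworst(G)}$ by \refas{as:unique}, for large $i$ the $G^*$ side strictly exceeds the $G$ side, so $G$ would be eliminated via $G' = G^*$ in \refeq{def:C_i}, contradicting $G \in {\cal C}_\infty$. Hence the algorithm halts and returns $\hat{\cal C} = \{G^*\}$.

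The main obstacle is exactly that lower bound: $\min_{j' \in m_i^{(G^*)}} \LCB_{t_i}(j')$ is a minimum over the \emph{changing} set of potential worst arms, and a single poorly-estimated arm in $m_i^{(G^*)}$ can drag it below $\LCB_{t_i}(\jworst(G^*))$, so one cannot simply replace the minimum by the worst arm. The resolution is that any arm in $m_i^{(G^*)}$ lies in ${\cal A}_i$ and is therefore pulled that epoch. Letting $J_\infty$ be the (finite) set of arms pulled infinitely often in the stabilised regime, for large $i$ one has $m_i^{(G^*)} \subseteq J_\infty$, whence $\min_{j' \in m_i^{(G^*)}} \LCB_{t_i}(j') \ge \mu_{\jworst(G^*)} - 2\max_{j' \in J_\infty} U(T_{j'}(t_i),\delta/n) \to \mu_{\jworst(G^*)}$, supplying the bound needed to close the argument.
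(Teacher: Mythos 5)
Your proposal is correct and follows essentially the same route as the paper: condition on uniformly valid confidence bounds, show $\jworst(G) \in m_i^{(G)}$ for every group and epoch, and use the resulting sandwich inequalities to prove by induction that $G^*$ is never eliminated from ${\cal C}_i$. The only divergence is that the paper dispatches termination in a single sentence (the quantitative version appearing later in the proof of the main theorem via the gaps $\Delta_j$), whereas you give a self-contained stabilization-and-contradiction argument --- correctly handling the subtlety that $\min_{j' \in m_i^{(G^*)}} \mathrm{LCB}_{t_i}(j')$ ranges over a changing set by noting every such arm lies in ${\cal A}_i$ and is pulled each epoch --- which is a valid and arguably more complete treatment of that step.
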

\begin{proof}
    We define an event under which the optimal group $G^*$ remains a candidate group, and its worst arm $j_{\mathrm{worst}}(G^*)$ remains a candidate worst arm in $G^*$ in epoch $i$:
    \begin{align*}
        {\cal E}_i := \{G^* \in {\cal C}_i\} \cap \{j_{\mathrm{worst}}(G^*) \in {\cal A}_i \}.
    \end{align*}
    We show that ${\cal E}_i$ holds for all $i$ whenever the confidence bounds introduced \refsec{sec:aux} are valid, which we know holds with probability at least $1 - \frac{2 + \epsilon}{\epsilon/2}(\frac{\delta}{\log(1+\epsilon)})^{1+\epsilon}$.
    
    At the beginning of \refalg{alg:elimination}, $m_0^{(G)} = G$ for all $G \in \cal G$ and $G^* \in {\cal G} = {\cal C}_0$. Hence $j_{\mathrm{worst}}(G^*) \in  \{1,2,...,n\} =  {\cal A}_0$, and the base case ${\cal E}_0$ holds.  We proceed by showing that when ${\cal E}_{i-1}$ holds, so does ${\cal E}_i$.
    
    First, the validity of the confidence bounds gives for all $G$ that
    \begin{align}
        \mathrm{LCB}_{t_i}(j_{\mathrm{worst}}(G)) &\leq \mu_{j_{\mathrm{worst}}(G)} \\ &= \min\limits_{j' \in G} \mu_{j'}  \\  &\leq \min\limits_{j' \in G} \mathrm{UCB}_{t_i}(j'), \label{eq:m_proof}
    \end{align}
    implying that $j_{\mathrm{worst}}(G) \in  m_{i}^{(G)}$ for all $G$.  That is, each group's truly worst arm always remains a candidate worst arm, and this holds in particular for $j_{\rm worst}(G^*)$ in $G^*$.  For later use, it will also be useful to note that this property implies the final minimum in \refeq{eq:m_proof} can be restricted to $m_i^{(G)}$ instead of $G$, yielding
    \begin{align}
        \min\limits_{j\in m_i^{(G)}} \mathrm{UCB}_{t_i}(j) \geq \mu_{j_{\mathrm{worst}}(G)}. \label{eq: G*_in_C_0}
    \end{align}

    It remains to show that $G^*$ always remains a candidate potentially optimal group.   Denoting the set of worst arms among all groups as ${\cal J}_{\rm worst}({\cal G}) := \{j: \mu_j = \min\limits_{j' \in G} \mu_{j'} \text{ for some } G \in {\cal G}\}$, we have
    \begin{align}
        \min\limits_{j\in m_i^{(G^*)}} \mathrm{UCB}_{t_i}(j)
        & \geq \mu_{j_{\mathrm{worst}}(G^*)} \label{eq: G*_in_C_1} \\
        & = \max\limits_{j \in {\cal J}_{\rm worst}({\cal G})} \mu_{j} \label{eq: G*_in_C_2}  \\
        & \geq \max\limits_{j \in {\cal J}_{\rm worst}({\cal G})} \mathrm{LCB}_{t_i}(j) \label{eq: G*_in_C_3}  \\
        & \geq \min\limits_{j \in m_i^{(G)}} \mathrm{LCB}_{t_i}(j) \; \forall G \in {\cal C}_{i} \label{eq: G*_in_C_4} ,
    \end{align}
    where \refeq{eq: G*_in_C_1} is an application of \refeq{eq: G*_in_C_0} to $G^*$, \refeq{eq: G*_in_C_2} holds by the definitions of $G^*$ and ${\cal J}_{\rm worst}({\cal G})$, \refeq{eq: G*_in_C_3} uses the validity of the confidence bounds, and \refeq{eq: G*_in_C_4} holds because for each candidate group $G \in {\cal C}_i$,, we have 
    \begin{align}
        \min\limits_{j \in m_i^{(G)}} \mathrm{LCB}_{t_i}(j) &\leq \mathrm{LCB}_{t_i}(j_{\mathrm{worst}}(G)) \nonumber \\ &\leq \max\limits_{j \in {\cal J}_{\rm worst}({\cal G})} \mathrm{LCB}_{t_i}(j)
    \end{align}
    due to the fact that $j_{\mathrm{worst}}(G) \in {\cal J}_{\rm worst}({\cal G})$ and $j_{\mathrm{worst}}(G) \in m_i^{(G)}$.
    
    By the definition of ${\cal C}_i$ and \refeq{eq: G*_in_C_4}, we conclude that $G^* \in {\cal C}_i$ after the $i$-th epoch. Combining this with $j_{\mathrm{worst}}(G^*) \in  m_{i}^{(G^*)}$ , we then have $j_{\mathrm{worst}}(G^*) \in  {\cal A}_i$, implying that ${\cal E}_i$ holds.
    
    Since the algorithm stops when $|{\cal C}_i| = 1$ and $G^* \in {\cal C}_i$ for all $i$, we conclude that the returned set $\hat{\cal C} = {\cal C}_i$ must contain only the optimal group $G^*$.  Note that by the identifiability assumption (\refas{as:unique}) and the fact that $U(t,\delta) \to 0$ as $t \to \infty$, the algorithm will never continue running forever when the confidence bounds remain valid.
\end{proof}


Having established high-probability correctness in \reflem{lem:conv_SE}, it remains to bound the number of arm pulls. We bound the number of pulls separately for each arm, considering the cases $j \in G^*$ and $j \notin G^*$ separately, and showing that $U(T_j(t), \frac{\delta}{n}) < \frac{\Delta_j}{4}$ is a sufficient condition for the arm to be eliminated in all cases. We then apply \reflem{lem:inversion} and sum over the arms to obtain the result.

We henceforth suppose that the confidence bounds are valid, as we already considered in the proof of \reflem{lem:conv_SE}. 
First observe that if $U(T_j(t), \frac{\delta}{n}) < \frac{\Delta_j}{4}$, then we have $|\mathrm{UCB}_{t_i}(j) - \mathrm{LCB}_{t_i}(j)| < \frac{\Delta_j}{2}$.  In the following, we assume that this is the case for all $j$ indexing non-eliminated arms; note that by construction, all such arms have been pulled exactly the same number of times after each epoch. 

\paragraph{Case 1 ($j \in G^*$)} In this case, we immediately have $\mu_j \geq \mu_{j_{\mathrm{worst}}(G^*)}$.  By design in the algorithm, $j$ will stop being pulled in either of the following scenarios:
\begin{enumerate}[noitemsep,topsep=0pt,parsep=0pt,partopsep=0pt]
    \item $j$ is no longer a potential worst arm in any group;
    \item $G^*$ is found to be the optimal group, and the algorithm terminates.
\end{enumerate}
Recall the definitions of $\Delta'_j$, $\Delta''_j$, $\Delta_0$, and $\Delta_j$ in \refsec{sec:assump}.  For $j \in G^*$, we have $\Delta_j'' =\mu_{j^*} - \mu_{j^*}= 0$, and hence $\Delta_j = \min \{\Delta'_j, \Delta_0\}$. From the intuition behind each gap value defined in \refsec{sec:setup}, we note that scenario 1 above is related to $\Delta_j'$, and scenario 2 is related to $\Delta_0$. We now consider each scenario separately as follows:
\begin{enumerate}[noitemsep,topsep=0pt,parsep=0pt,partopsep=0pt]
    \item If $\Delta_j = \Delta_j'$, then for all $G$ with $j \in G$, we have
    \begin{align}
        \mathrm{LCB}_{t_i}(j)
        & > \mathrm{UCB}_{t_i}(j) - \frac{\Delta_j'}{2} \label{eq:case_1.1_1} \\
        & \geq \mu_j - \frac{\Delta_j'}{2} \label{eq:case_1.1_2} \\
        & \geq \mu_j - \frac{\mu_j - \mu_{j_{\mathrm{worst}}(G)}}{2} \label{eq:case_1.1_3} \\
        & =\mu_{j_{\mathrm{worst}}(G)} + \frac{\mu_j - \mu_{j_{\mathrm{worst}}(G)}}{2} \label{eq:case_1.1_4} \\
        & \geq \mu_{j_{\mathrm{worst}}(G)} + \frac{\Delta_j'}{2} \label{eq:case_1.1_5} \\
        & \geq \mathrm{LCB}_{t_i}(j_{\mathrm{worst}}(G)) + \frac{\Delta_j'}{2} \label{eq:case_1.1_6} \\
        & > \mathrm{UCB}_{t_i}(j_{\mathrm{worst}}(G)), \label{eq:case_1.1_7}
    \end{align}
    where \refeq{eq:case_1.1_1} and \refeq{eq:case_1.1_7} apply the assumption $|\mathrm{UCB}_{t_i}(j) - \mathrm{LCB}_{t_i}(j)| < \frac{\Delta_j}{2}$,  \refeq{eq:case_1.1_2} and \refeq{eq:case_1.1_6} use the confidence bounds, and \refeq{eq:case_1.1_3} and \refeq{eq:case_1.1_5} use the definition of $\Delta'_j$. From \refeq{eq:case_1.1_7}, we have that $j$ is removed from ${\cal A}_i$ and is no longer pulled.
    \item If $\Delta_j = \Delta_0$, then for $G \neq G^*$, we have
    \begin{align}
        &\mathrm{LCB}_{t_i}(j_{\mathrm{worst}}(G^*)) \nonumber \\
        & ~~ > \mathrm{UCB}_{t_i}(j_{\mathrm{worst}}(G^*)) - \frac{\Delta_0}{2} \label{eq:case_1.2_1} \\
        & ~~ \geq \mu_{j_{\mathrm{worst}}(G^*)} - \frac{\Delta_0}{2} \label{eq:case_1.2_2} \\
        & ~~ \geq \mu_{j_{\mathrm{worst}}(G^*)} - \frac{\mu_{j_{\mathrm{worst}}(G^*)} - \mu_{j_{\mathrm{worst}}(G)}}{2} \label{eq:case_1.2_3} \\
        & ~~ = \mu_{j_{\mathrm{worst}}(G)} + \frac{\mu_{j_{\mathrm{worst}}(G^*)} - \mu_{j_{\mathrm{worst}}(G)}}{2} \label{eq:case_1.2_4} \\
        & ~~ \geq \mu_{j_{\mathrm{worst}}(G)} + \frac{\Delta_0}{2} \label{eq:case_1.2_5} \\
        & ~~ \geq \mathrm{LCB}_{t_i}(j_{\mathrm{worst}}(G)) + \frac{\Delta_0}{2} \label{eq:case_1.2_6} \\
        & ~~ > \mathrm{UCB}_{t_i}(j_{\mathrm{worst}}(G)), \label{eq:case_1.2_7}
    \end{align}
    where \refeq{eq:case_1.2_1} and \refeq{eq:case_1.2_7} apply the assumption $|\mathrm{UCB}_{t_i}(j) - \mathrm{LCB}_{t_i}(j)| < \frac{\Delta_j}{2}$,  \refeq{eq:case_1.2_2} and \refeq{eq:case_1.2_6} use the confidence bounds, and \refeq{eq:case_1.2_3} and \refeq{eq:case_1.2_5} use the definition of $\Delta'_j$.  Then, \refeq{eq:case_1.2_7} implies that all of the non-optimal groups are removed from $\Cc_i$, so the algorithm terminates and $j$ is no longer pulled.
\end{enumerate}

\paragraph{Case 2 ($j \notin G^*$)} In this case, $j$ will stop being pulled if any of the following scenarios are satisfied:
\begin{enumerate}[noitemsep,topsep=0pt,parsep=0pt,partopsep=0pt]
    \item $j$ is no longer a potential worst arm in any group;
    \item $G^*$ is found to be the optimal group, and the algorithm terminates;
    \item All the groups $G$ where $j\in G$ are no longer candidate groups.
\end{enumerate}
The gap values associated with these conditions are $\Delta'_j$, $\Delta_0$, and $\Delta''_j$, respectively.
In our elimination algorithm, $G^*$ is found only after all suboptimal groups are removed. Therefore, the second scenario will never be satisfied before the third condition is satisfied, and we have $\Delta_j = \min\{\Delta'_j,\Delta''_j\}$.

For brevity, we use the shorthand $j^* = j_{\rm worst}(G^*)$ in the remainder of this section.  If the arm $j$ has a mean reward satisfying $\mu_j > \mu_{j^*}$, then 
\begin{align}
    \Delta_j' &= \min\limits_{G \,:\, j \in G} \big( \mu_j - \mu_{j_{\mathrm{worst}}(G)} \big) \nonumber \\ &> 
    \min\limits_{G \,:\, j \in G} \big( \mu_{j^*} - \mu_{j_{\mathrm{worst}}(G)} \big) = \Delta_j'' > 0.
\end{align}
Hence, $\Delta_j \equiv \Delta_j'$. In this case, by the same reasoning as \refeq{eq:case_1.1_1}--\refeq{eq:case_1.1_7}, the first condition is satisfied and $j$ is removed from ${\cal A}_i$.

By the same reasoning, if $\mu_j < \mu_{j^*}$, then $\Delta_j = \Delta_j''$. In this case, for all $G \ne G^*$ with $j\in G$,  we have
\begin{align}
    &\mathrm{LCB}_{t_i}(j_{\mathrm{worst}}(G^*)) \nonumber \\
    & ~~ > \mathrm{UCB}_{t_i}(j_{\mathrm{worst}}(G^*)) - \frac{\Delta_j''}{2} \label{eq:case_2.2_1} \\
    & ~~ \geq \mu_{j_{\mathrm{worst}}(G^*)} - \frac{\Delta_j''}{2} \label{eq:case_2.2_2} \\
    & ~~ \geq \mu_{j^*} - \frac{\mu_{j_{\mathrm{worst}}(G^*)} - \mu_{j_{\mathrm{worst}}(G)}}{2} \label{eq:case_2.2_3} \\
    & ~~ = \mu_{j_{\mathrm{worst}}(G)} + \frac{\mu_{j^*} - \mu_{j_{\mathrm{worst}}(G)}}{2} \label{eq:case_2.2_4} \\
    & ~~ \geq \mu_{j_{\mathrm{worst}}(G)} + \frac{\Delta_j''}{2} \label{eq:case_2.2_5} \\
    & ~~ \geq \mathrm{LCB}_{t_i}(j_{\mathrm{worst}}(G)) + \frac{\Delta_j''}{2} \label{eq:case_2.2_6} \\
    & ~~ > \mathrm{UCB}_{t_i}(j_{\mathrm{worst}}(G)), \label{eq:case_2.2_7} 
\end{align}
where \refeq{eq:case_2.2_1} and \refeq{eq:case_2.2_7} apply the assumption $|\mathrm{UCB}_{t_i}(j) - \mathrm{LCB}_{t_i}(j)| < \frac{\Delta_j}{2}$,  \refeq{eq:case_2.2_2} and \refeq{eq:case_2.2_6} use the confidence bounds, \refeq{eq:case_2.2_3} and \refeq{eq:case_2.2_5} use the definition of $\Delta'_j$, and  \refeq{eq:case_2.2_4} uses the definition of $j^*$. Since \refeq{eq:case_2.2_7} implies the removal of all $G$ where $j\in G$, we obtain that all of these suboptimal groups are eliminated, and hence scenarios 3 is satisfied and $j$ is no longer pulled.

Having handled both cases, we conclude that arm $j$ is no longer pulled when $U(T_j(t), \frac{\delta}{n}) < \frac{\Delta_j}{4}$. Combining this with \reflem{lem:inversion}, we obtain the bound on number of arm pulls $T_j(t)$ for each individual arm $j$:
\begin{align}
    T_j(t) \leq \frac{2\gamma}{{\Delta_j}^2} \log \frac{2\log(\gamma(1+ \epsilon){\Delta_j}^{-2})}{{\delta}/n}. \label{eq:indiv_bound}
\end{align}
Summing over the $n$ arms, we obtain the bound on the total number of arm pulls in \refeq{eq:T_total}.

\section{Proof of \refthm{thm:ub_so} (Regret Bound for \scshape{StableOpt})} \label{sec:pf_ub_so}

The first steps of the proof follow those of \cite{bogunovic2018adversarially}.  With probability at least $1 - \frac{2 + \epsilon}{\epsilon/2}\big(\frac{\delta}{\log(1+\epsilon)}\big)^{1+\epsilon}$, the confidence bounds in \refeq{def:UCB}--\refeq{def:LCB} are uniformly valid, and we henceforth condition on this being the case.  For the group $G_t$ and corresponding arm $j_t \in G_t$ selected in round $t$, we have:
\begin{align}
    r(G_t) & = \max\limits_{G \in {\cal G}} \min\limits_{j \in G} \mu_j -                            \min\limits_{j \in G_t} \mu_j  \label{eq:SO_ub1}\\
    & \leq \max\limits_{G \in {\cal G}} \min\limits_{j \in G} \mu_j - \min\limits_{j \in G_t} \mathrm{LCB}_{t -1}(j) \label{eq:SO_ub2} \\
    & =\max\limits_{G \in {\cal G}} \min\limits_{j \in G} \mu_j - \mathrm{LCB}_{t -1}(j_t) \label{eq:SO_ub3} \\
    & \leq \max\limits_{G \in {\cal G}} \min\limits_{j \in G} \mathrm{UCB}_{t-1}(j) - \mathrm{LCB}_{t -1}(j_t) \label{eq:SO_ub4} \\
    & = \min\limits_{j \in G_t} \mathrm{UCB}_{t-1}(j) -\mathrm{LCB}_{t -1}(j_t) \label{eq:SO_ub5} \\
    & \leq \mathrm{UCB}_{t-1}(j_t) - \mathrm{LCB}_{t-1}(j_t) \label{eq:SO_ub6} \\
    & = 2 \, U\Big( T_{j_t}(t-1), \frac{\delta}{n}\Big) \label{eq:SO_ub7}
\end{align}
where \refeq{eq:SO_ub2} and \refeq{eq:SO_ub4} use the validity of the confidence bounds, \refeq{eq:SO_ub3} and \refeq{eq:SO_ub5} use the selection rules for $j_t$ and $G_t$, and \refeq{eq:SO_ub7} uses the definitions of the confidence bounds.

Using the choice of $G^{(T)}$ in \refeq{eq:choice_G}, we further have:
\begin{align}
    r(G^{(T)}) & \leq \max\limits_{G \in {\cal G}} \min\limits_{j \in G} \mu_j - \mathrm{LCB}_{T -1}(j_T) \label{eq:SO_ub8} \\
    & \leq \frac{1}{T} \sum\limits_{t = 1}^T \Big( \max\limits_{G \in {\cal G}} \min\limits_{j \in G} \mu_j - \mathrm{LCB}_{t -1}(j_t) \Big) \label{eq:SO_ub9} \\
    & \leq \frac{1}{T} \sum\limits_{t = 1}^T 2 \, U(T_{j_t}(t-1), \frac{\delta}{n}), \label{eq:SO_ub10}
\end{align}
where \refeq{eq:SO_ub8} follows from \refeq{eq:SO_ub2}, \refeq{eq:SO_ub9} bounds the minimum by the average, and \refeq{eq:SO_ub10} follows from the argument leading to \refeq{eq:SO_ub7}.


We observe from \refeq{eq:U_def} that $U(t, \frac{\delta}{n})$ is monotonically decreasing with respect to $t$.\footnote{An exception may be moving from $t=1$ to $t=2$, but this does not affect our argument here.} Therefore, $\sum\limits_{t = 1}^T U(T_{j_t}(t-1), \frac{\delta}{n}) $ is highest when each arm is pulled the same number of times (up to rounding), i.e., $\sum\limits_{t=1}^T U(T_{j_t}(t-1), \frac{\delta}{n}) \leq n \big(1 + \sum\limits_{t = 1}^{\lfloor \frac{T}{n} \rfloor } U(t, \frac{\delta}{n}) \big)$, where we recall that we define $U(0,\delta) = 1$. Hence:
\begin{align}
    &\frac{1}{T} \sum\limits_{t = 1}^N 2 \, U\Big(T_{j_t}(t-1), \frac{\delta}{n}\Big) \nonumber \\ 
    &~~ \leq \frac{2n}{T} \bigg( 1 + \sum\limits_{t = 1}^{\lfloor \frac{T}{n} \rfloor} U\Big(t, \frac{\delta}{n}\Big) \bigg) \label{eq:SO_new1} \\
    &~~  \leq \frac{2n}{T} +  \frac{2n}{T} \cdot C_1(\epsilon) \sum\limits_{t = 1}^{\lfloor \frac{T}{n} \rfloor} \sqrt{\frac{1}{2t}\log\frac{\log(1 + \epsilon)t}{\frac{\delta}{n}}} \label{eq:SO_new2} \\
    &~~  \leq \frac{2n}{T} +  \frac{2C_1(\epsilon)n}{T} \sum\limits_{t = 2}^{\lfloor \frac{T}{n} \rfloor} \bigg( \sqrt{\frac{1}{2t}\log \frac{n}{\delta} }  \nonumber \\
    & \qquad\qquad\qquad + \sqrt{\frac{1}{2t}\log(\log((1 + \epsilon)t))} \bigg) \label{eq:SO_new3} \\
    &~~  \le \frac{2n}{T} + \frac{2 C_2(n, \delta,\epsilon,T)n}{T} \sum\limits_{t = 1}^{\lfloor \frac{T}{n} \rfloor} \sqrt{\frac{1}{t}} \label{eq:SO_new4} \\
    &~~  \le   \frac{2n}{T} + \frac{4 C_2(n, \delta,\epsilon,T)n}{T} \sqrt{\frac{T}{n}} \label{eq:SO_new5} \\
    &~~  = \frac{2n}{T} + 4 C_2(n, \delta,\epsilon,T) \sqrt{\frac{n}{T}}, \label{eq:SO_new6}
\end{align}
where \refeq{eq:SO_new2} uses the definition of $U$ and defines $C_1(\epsilon) = (1+\sqrt{\epsilon})\sqrt{1+\epsilon}$, \refeq{eq:SO_new3} holds since $\sqrt{a+b} \leq \sqrt{a} + \sqrt{b}$, \refeq{eq:SO_new4} defines the quantity $C_2(n, \delta,\epsilon,T) = \frac{C_1(\epsilon)}{\sqrt 2} \big( \sqrt{\log\frac{n}{\delta}} + \sqrt{\log(\log((1 + \epsilon)T))}$, and \refeq{eq:SO_new5} uses the fact that $\sum_{i=1}^k \frac{1}{\sqrt i} \le 2\sqrt{k}$.

Combining \refeq{eq:SO_ub10} and \refeq{eq:SO_new6} and letting $\epsilon \in (0,1)$ be an arbitrary fixed constant gives the desired $O\big( \sqrt{\frac{n}{T}} \big( \sqrt{\log\frac{n}{\delta}} + \log \log T \big)  \big)$ regret bound. The term $\frac{2 + \epsilon}{\epsilon/2}\big(\frac{\delta}{\log(1+\epsilon)}\big)^{1+\epsilon}$ in the error probability is at most $O(\delta)$ regardless of the choice of $\epsilon \in (0,1)$.

\section{Proofs of Algorithm-Independent Lower Bounds} \label{sec:pfs_lb}

\subsection{A Fundamental Auxiliary Lemma} \label{sec:aux_lemma}

We make use of a fundamental result introduced in \cite{kaufmann2016complexity}, which has subsequently been applied to numerous bandit settings.  The following statement is somewhat different from that in \cite{kaufmann2016complexity}, and the differences are explained in \refsec{sec:note_kauf}.

\begin{lemma} \label{lem:kaufmann}
    {\em (Implicit in \cite{kaufmann2016complexity})}
    Let ${\cal A} = (a_1, ..., a_n)$ and ${\cal A'} = (a_1', ...., a_n')$ be two distinct bandit instances such that for any arm pair $(a_j, a'_j)$, the corresponding distributions $P_j$ and $P'_j$ are mutually absolutely continuous. For any stopping time $\sigma$ which is almost surely finite under instance ${\cal A}$, and any event $\mathcal{E}$ depending only on the reward history up to the stopping time and satisfying $\mathbb{P}_{\cal A}[\mathcal{E}] \in (0,1)$, we have
    \begin{align}
        \sum\limits_{j=1}^n \mathbb{E}_{\cal A}[N_j(\sigma)]D(P_j||P'_j) \geq d(\mathbb{P}_{\cal A}[\mathcal{E}], \mathbb{P}_{\cal A'}[\mathcal{E}]), \label{eq:kauf1}
    \end{align}
    where 
    $d(x_1, x_2) = x_1\log\frac{x_1}{x_2} + (1-x_1)\log\frac{1-x_1}{1-x_2}$ is the binary relative entropy function, with $d(0,0) = d(1,1) = 0$, and $\mathbb{P}_{\cal A}$ denotes the probability under instance $\cal A$. 
\end{lemma}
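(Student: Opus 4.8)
The plan is to use the standard change-of-measure argument built on the log-likelihood ratio and the data-processing inequality for relative entropy. First I would set up the likelihood-ratio process. Let $\{\mathcal{F}_t\}$ be the filtration generated by the sequence of (arm, reward) pairs observed up to time $t$, so that $\sigma$ is a stopping time with respect to $\{\mathcal{F}_t\}$ and $\mathcal{E} \in \mathcal{F}_\sigma$. Writing the joint law of the observed history as a product of the (common) policy kernels and the per-arm reward densities, the policy factors cancel because the same algorithm is run under both instances. Hence the log-likelihood ratio of the stopped history under $\mathcal{A}$ relative to $\mathcal{A}'$ reduces to
\[
    L_\sigma = \sum_{j=1}^n \sum_{s=1}^{N_j(\sigma)} \log\frac{dP_j}{dP'_j}(X_{j,s}),
\]
which is well-defined by the mutual absolute continuity assumption.

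Next I would compute the $\mathcal{A}$-expectation of $L_\sigma$. Each arm $j$ carries an i.i.d. reward stream whose log-likelihood increments have mean $D(P_j\|P'_j)$ under $\mathcal{A}$, and $N_j(\sigma)$ is an integrable stopping time with respect to that stream's own filtration. Wald's identity then gives $\mathbb{E}_{\cal A}\big[\sum_{s=1}^{N_j(\sigma)} \log\frac{dP_j}{dP'_j}(X_{j,s})\big] = \mathbb{E}_{\cal A}[N_j(\sigma)]\, D(P_j\|P'_j)$, and summing over $j$ yields
\[
    \mathbb{E}_{\cal A}[L_\sigma] = \sum_{j=1}^n \mathbb{E}_{\cal A}[N_j(\sigma)]\, D(P_j\|P'_j),
\]
which is exactly the left-hand side of \refeq{eq:kauf1}. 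Equivalently, $\mathbb{E}_{\cal A}[L_\sigma]$ equals the relative entropy $D(\mu_\sigma \| \mu'_\sigma)$ between the laws $\mu_\sigma, \mu'_\sigma$ of the stopped history under the two instances.

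Finally I would invoke the data-processing inequality. The indicator $\mathbf{1}[\mathcal{E}]$ is a measurable function of the stopped history, and its law under $\mathcal{A}$ (resp. $\mathcal{A}'$) is $\Bernoulli(\mathbb{P}_{\cal A}[\mathcal{E}])$ (resp. $\Bernoulli(\mathbb{P}_{\cal A'}[\mathcal{E}])$). Since relative entropy contracts under any (possibly randomized) map, $D(\mu_\sigma\|\mu'_\sigma) \ge d(\mathbb{P}_{\cal A}[\mathcal{E}], \mathbb{P}_{\cal A'}[\mathcal{E}])$; combining this with the previous display gives the claim.

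The main obstacle is the rigorous justification of the expectation computation at the stopping time $\sigma$. One must verify that $\mathbb{E}_{\cal A}[N_j(\sigma)] < \infty$ (following from $\sigma$ being almost surely finite together with the integrability needed for Wald), confirm that the policy likelihood terms genuinely cancel in the adaptive setting so that $L_\sigma$ takes the clean summed form above, and ensure the interchange of expectation with the random sum is legitimate. Handling these measure-theoretic details carefully, rather than the inequality itself, is where the real work lies; the final contraction step is then essentially immediate.
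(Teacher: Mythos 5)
Your proposal is correct and is essentially the argument the paper relies on: the paper does not reproduce a proof of \reflem{lem:kaufmann} but defers to \cite{kaufmann2016complexity}, whose proof is exactly your combination of the cancelling policy terms in the log-likelihood ratio, Wald's identity applied arm-by-arm at the stopped time, and the data-processing (contraction) inequality applied to the indicator of $\mathcal{E}$. The only point the paper adds (in its note on the lemma) is the one you already flag, namely that almost-sure finiteness of $\sigma$ under $\mathcal{A}$ alone suffices for the Wald step, with the assumption $\mathbb{P}_{\mathcal{A}}[\mathcal{E}]\in(0,1)$ sidestepping the degenerate cases.
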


High-probability guarantees for MAB problems are based on attaining a small error probability for a suitably-defined notion of success; in our case, this is the identification of $G^*$.  Hence, if $\mathcal{E}$ is the event that the returned group is the best according to instance $\cal{A}$, then we should have $\mathbb{P}_{\cal A}[\mathcal{E}] \geq 1 - \delta$ and $\mathbb{P}_{\cal A'}[\mathcal{E}] \leq \delta$ given a target error probability $\delta \in \big(0,\frac{1}{2}\big)$, as long as the best group in ${\cal A}$ is not max-min optimal in ${\cal A}'$.  Since $d(x, 1-x) \geq \log \frac{1}{2.4x}$ for all $x \in [0,1]$ \cite{kaufmann2016complexity}, we can then simplify \refeq{eq:kauf1} to
\begin{align}
    \sum\limits_{j=1}^n \mathbb{E}_{\cal A}[N_j(\sigma)]D(P_j||P_{j'}) \geq  \log\frac{1}{2.4\delta}. \label{eq:kauf2}
\end{align}
Then, given a ``base'' instance ${\cal A}$ with optimal group $G^*$, we are left to design another instance ${\cal A'}$ such that $G^*$ is suboptimal, ideally with each $D(P_j||P_{j'})$ being small so that \refeq{eq:kauf2} leads to a stronger lower bound on the number of arm pulls.

\subsection{Note on \reflem{lem:kaufmann}} \label{sec:note_kauf}

\reflem{lem:kaufmann} is slightly different from that in \cite{kaufmann2016complexity}, in that (i) the stopping time is only assumed to be almost-surely finite under $\cal A$ but not necessarily under $\cal A'$, and (ii) we assume that $\mathbb{P}_{\cal A}[\mathcal{E}] \in (0,1)$, rather than allowing all of $[0,1]$.

To understand this difference, we note that in \cite{kaufmann2016complexity}, the almost-sure finite stopping time is used for two purposes: To apply Wald's lemma to a sum of log-likelihood ratios under instance $\cal A$, and to prove that $\mathbb{P}_{\cal A}[\mathcal{E}] = 0 \iff \mathbb{P}_{\cal A}[\mathcal{E}] = 0$ (and similarly if both 0s are replaced by 1s).  The former only requires the stopping time to be almost-surely finite under $\cal A$.  As for the latter, the proof in \cite{kaufmann2016complexity} establishes that if $\sigma$ is almost-surely finite under $\cal A$, then it holds that $\mathbb{P}_{\cal A'}[\mathcal{E}] = 0 \implies \mathbb{P}_{\cal A}[\mathcal{E}] = 0$, or equivalently $\mathbb{P}_{\cal A}[\mathcal{E}] \in (0,1) \implies \mathbb{P}_{\cal A'}[\mathcal{E}] \in (0,1)$.  We do not require the reverse implication, because we already explicitly assume that $\mathbb{P}_{\cal A}[\mathcal{E}] \in (0,1)$.

\subsection{Proof of \refthm{thm:lower}}

As suggested by \reflem{lem:kaufmann}, we prove \refthm{thm:lower} by taking the given instance with optimal group $G^*$, and shifting one or more of its arms (from $\mu_j$ to $\mu'_j$) in a way that ensures that $G^*$ is suboptimal in the new instance. 


Without loss of generality, assume that in the original instance, $G_1 =  G^*$ is the optimal group, and $G_2$ is the second best group. We consider the two cases in the theorem statement as follows.

\paragraph{Case 1 ($j \in G_1$)} For a fixed arm $j$, we define an instance ${\cal A}^{(j)}$ such that the arm means $\mu_{i}$ are unchanged for all $i \neq j$, and where $\mu_j$ changes to another value $\mu'_j$; the corresponding distributions are denoted by $P_j$ and $P'_j$.  Specifically, we choose $\mu'_j = \mu_j - (1+\alpha)(\Delta'_j + \Delta_0)$ for some arbitrarily small $\alpha > 0$.  By the definitions of $\Delta'_j$ and $\Delta_0$ in \refsec{sec:assump}, the choice $\alpha = 0$ would make $\mu'_j$ exactly equal to $\mu_{j_{\rm worst}(G_2)}$ (the subtraction of $\Delta'_j$ aligns the mean with $\mu_{j_{\rm worst}(G_1)}$, and the subtraction of $\Delta_0 = \mu_{j_{\mathrm{worst}}}(G_1) - \mu_{j_{\mathrm{worst}}}(G_2)$ further shifts this to $\mu_{j_{\rm worst}(G_2)}$).  Hence, no matter how small $\alpha > 0$, we have that $\mu'_j$ is strictly smaller than $\mu_{j_{\rm worst}(G_2)}$, so that $G_1$ is suboptimal in the new instance.


Hence, applying \reflem{lem:kaufmann} with $\cal E$ being the event of outputting $G_1$,\footnote{The condition $\mathbb{P}_{\cal A}[\mathcal{E}] \in (0,1)$ in the lemma is satisfied under our assumptions.  Specifically, \refas{as:lower}, and \refas{as:quadratic} ensure that the algorithm cannot have an error probability of zero.   (The assumption of $\cal A$ being identifiable rules out trivial cases such as only having one group, or all groups being identical.)} we obtain the following lower bound for number of pulls of $j \in G_1$:
\begin{equation}
    \mathbb{E}_{\cal A}[N_j(\sigma)] \geq \frac{\log\frac{1}{2.4\delta}}{D(P_j\|P_j')}
\end{equation}
since $D(P_i \|P_i') = 0$ for all $i \neq j$.  Upper bounding the denominator via \refas{as:quadratic} and using the fact that $\alpha$ can be arbitrarily small, we obtain the desired bound \refeq{eq:lower1}.

\paragraph{Case 2 ($j \notin G_1$).} Let $G$ be any suboptimal group.  Due to the max-min nature of the problem, pushing a {\em single} arm's mean up, even by an arbitrarily large amount, may fail to make $G$ a better group than $G_1$.  Instead, we need to shift {\em all arms with mean at most $\mu_{j_{\rm worst}(G_1)}$} up to a value strictly above $\mu_{j_{\rm worst}(G_1)}$.  To achieve this, we set $\mu'_{j} =  \mu_{j} + (1+\alpha) (\mu_{j_{\mathrm{worst}}(G^*)} - \mu_{j})$ for arbitrarily small $\alpha > 0$.  For any arms in $G$ with mean exactly $\mu_{j_{\mathrm{worst}}(G^*)}$, we can perform an arbitrarily small perturbation similar to Appendix A of \cite{scarlett2019overlapping}.  As a result, $G_1$ is no longer the best group in the new instance.


Applying \reflem{lem:kaufmann}, we obtain
\begin{align}
    \sum\limits_{j \in G \,:\, \mu_j < \mu_{\mathrm{worst}}(G^*)} \mathbb{E}[N_j(\sigma)] \cdot D(P_j \| P'_{j}) \geq  \log\frac{1}{2.4\delta}, \label{eq:lower2a}
\end{align}
where $P_j$ and $P'_j$ are the distributions in the original and modified instances.  Applying \refas{as:quadratic} and the fact that $\alpha$ is arbitrarily small, we obtain the lower bound \refeq{eq:lower2} on the total number of arm pulls within group $G$.

\subsection{Proof of \refcor{cor:lower2}}

The first term in \refeq{eq:T_lower} follows immediately by summing over $j \in G^*$ in the first case of \refthm{thm:lower}, so it remains to establish the second term.

By the definition $\Delta_G = \mu_{j_{\mathrm{worst}}(G^*)} - \mu_{j_{\mathrm{worst}}(G)}$, the inequality \refeq{eq:lower2} gives for any $G \ne G^*$ that
\begin{align} 
    \sum\limits_{j \in G} \mathbb{E}_{\cal A}[N_j(\sigma)] &\geq \sum\limits_{j \in G \,:\, \mu_j < \mu_{\mathrm{worst}}(G^*)} \mathbb{E}_{\cal A}[N_j(\sigma)]  \\ &\geq \frac{\log\frac{1}{2.4\delta}}{ \tilde{C} \Delta^2_G }, \label{eq:gw_lb}
\end{align}
since the definition of $\Delta_G$ ensures that all gaps appearing in \refeq{eq:lower2} are at most $\Delta_G$.

We observe that \refeq{eq:gw_lb} provides a group-wise lower bound. In a disjoint grouping setup, a simple summation over each group-wise lower bound produces a valid lower bound on total arm pulls for the instance $\cal A$. However, in an instance with overlapping groups, we cannot simply sum the group-wise lower bounds in this way. This is because the overlaps between groups can cause potential double (or triple, etc.) counting of $\mathbb{E}_{\cal A}[N_j(\sigma)]$ for some arms $j$ in the summation. 

To resolve this issue, we use the assumption that each arm can be in at most $m$ groups (with $m=1$ amounting to disjoint groups). Dividing the group-wise bound by $m$ accounts for any potential multiple-counting when computing the lower bound on total arm pulls upon adding up the group-wise bounds.  Thus, we can weaken \refeq{eq:gw_lb} to
\begin{align}
    \sum\limits_{j \in G} \mathbb{E}_{\cal A}[N_j(\sigma)] \geq \frac{1}{m} \sum\limits_{G: G\neq G^*} \frac{\log\frac{1}{2.4\delta}}{ \tilde{C} \Delta^2_G },
\end{align}
with the important difference that it is now valid to further sum over groups; doing so gives the second term in \refeq{eq:T_lower} as desired.

\section{Note on the Original Version of {\scshape StableOpt}} \label{sec:disc_SO}

Recall that the general {\scshape StableOpt} formulation is given in \refeq{eq:x*_eps}.
A connection between \refeq{eq:x*_eps} and a certain grouped max-min problem was already discussed in \cite{bogunovic2018adversarially}, focusing on non-overlapping groups.  In particular, it was noted that the interplay between $x$ and $\delta$ does not need to correspond to addition, and accordingly, we can replace $(x,\delta)$ by $(G,j)$ and transform \refeq{eq:x*_eps} as follows:
\begin{equation}
    G^* \in \argmax\limits_{G\in {\cal G}}\min\limits_{j \in G} f(j). \label{def:maxmin_G}
\end{equation}
In our setting, we take $f(j) = \mu_j$, i.e., the mean of the arm.

The theory in \cite{bogunovic2018adversarially} assumed that $f(x)$ has a bounded norm in a Reproducing Kernel Hilbert Space (RKHS) corresponding to some kernel function $k(x,x')$.  To produce our setting with independent arms, we can choose the 0-1 kernel $k(j,j') = \boldsymbol{1}\{j  = j'\}$, and the RKHS norm reduces to $\|f\|_k = \sqrt{\sum_{i=1}^n \mu_j^2} \le \sqrt{n}$.

While we can apply the main result of \cite{bogunovic2018adversarially} to deduce an instance-independent $O\big( \frac{1}{\sqrt T} \big)$ bound on the regret after $T$ arm pulls, the dependence of the implied constants on the number of arms $n$ is highly suboptimal.  This is because both the squared RKHS norm $\|f\|_k^2$ and the fundamental {\em information gain} quantity in \cite{bogunovic2018adversarially} scale linearly with $n$.  Fortunately, we can sharpen the dependence of the regret on $n$ by suitably adapting the analysis in a manner more directly targeted at our setup, as detailed in \refsec{sec:stable} and \refapp{sec:pf_ub_so}.

\section{Further Experiments} \label{sec:further}

\begin{figure*}[t!]
    \centering
    \subfloat[$c=1,\eta=0.01$]{
        \includegraphics[width=0.25\textwidth]{img/gap_value_0819_c=1_conf=0.01.pdf}}
    \subfloat[$c=1.25,\eta=0.01$]{
        \includegraphics[width=0.25\textwidth]{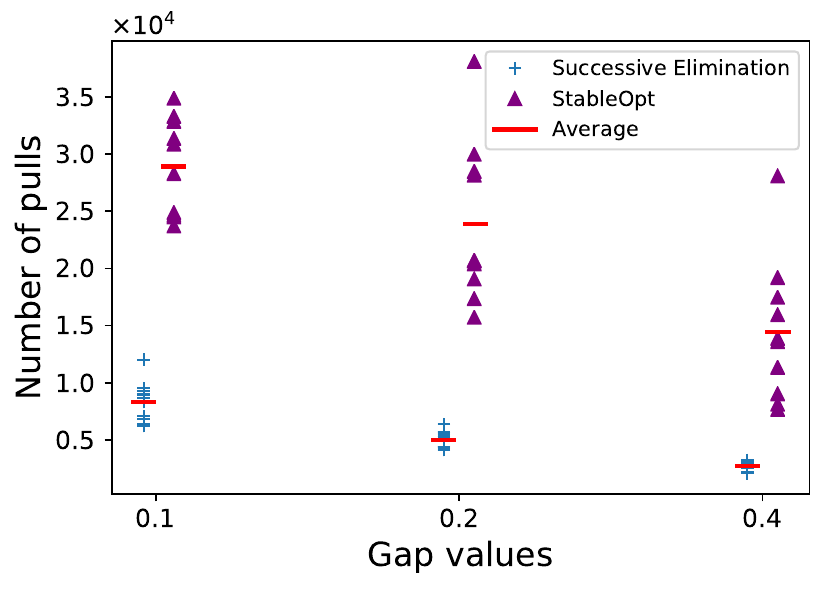}}
    \subfloat[$c=1.5, \eta=0.01$]{
        \includegraphics[width=0.24\textwidth]{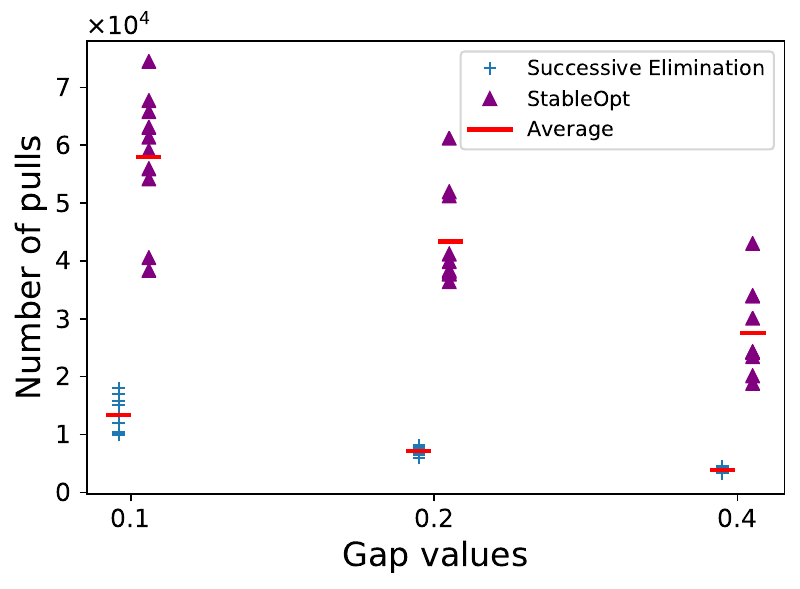}} 
    \subfloat[$c=2,\eta=0.01$]{
        \includegraphics[width=0.25\textwidth]{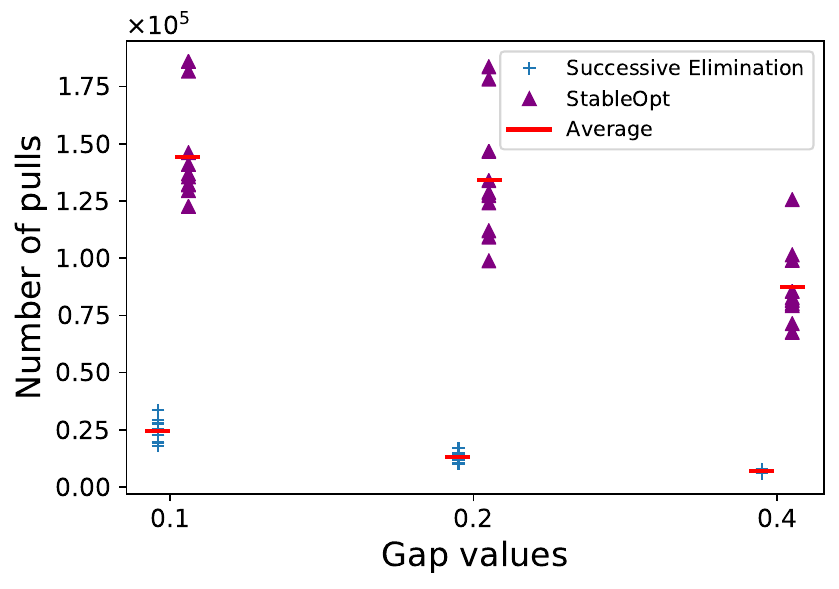}}\\
    \subfloat[$c=1,\eta=0.02$]{
        \includegraphics[width=0.25\textwidth]{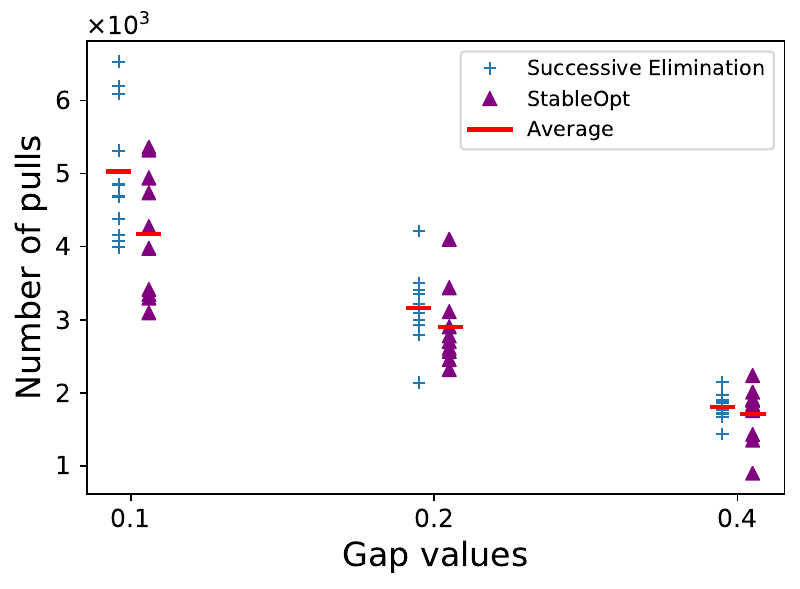}}
    \subfloat[$c=1.25,\eta=0.02$]{
        \includegraphics[width=0.25\textwidth]{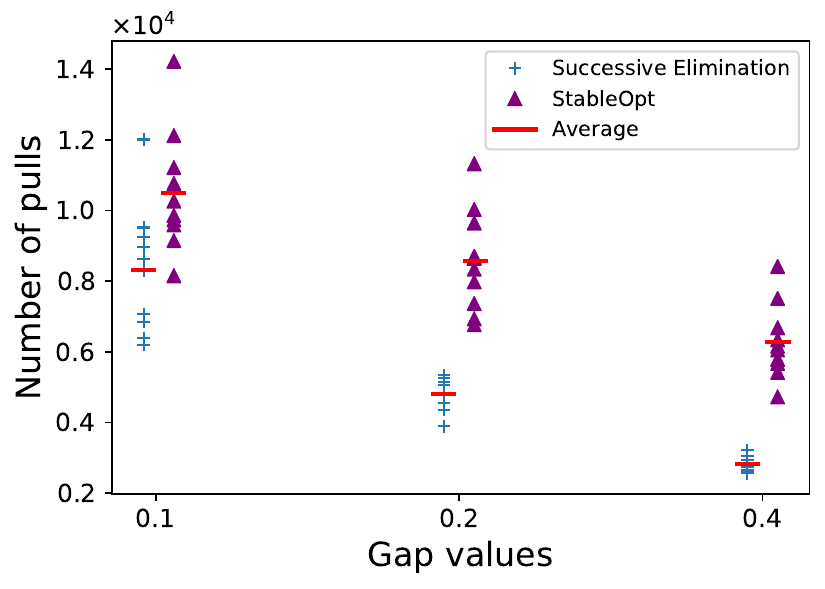}}
    \subfloat[$c=1.5,\eta=0.02$]{
        \includegraphics[width=0.25\textwidth]{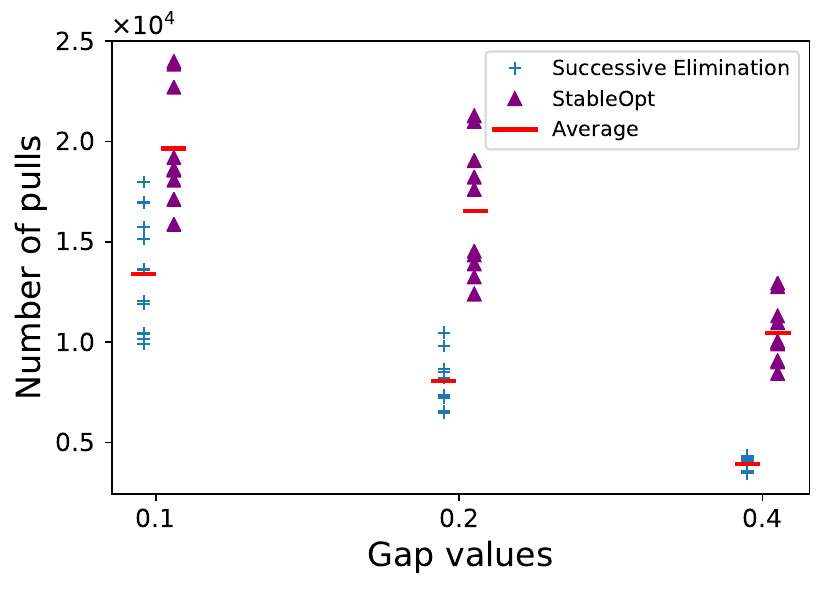}} 
    \subfloat[$c=2,\eta=0.02$]{
        \includegraphics[width=0.245\textwidth]{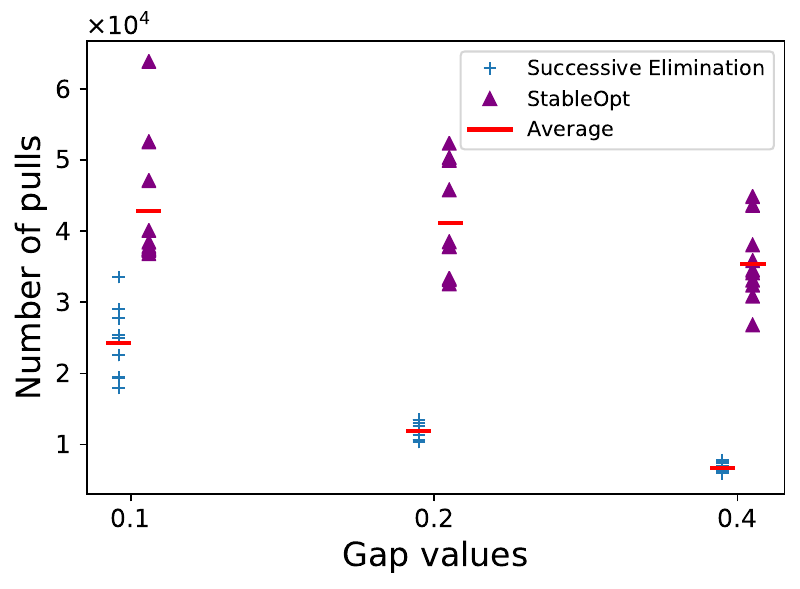}}
    \caption{Comparisons of number of arm pulls for various $c$ (controlling the confidence width), $\eta$ (controlling the {\scshape StableOpt} stopping condition, and $\Delta$ (gap value).)} 
    \label{fig:pull_comp_app}
\end{figure*}

\begin{figure*}[t!]
    \centering
    \subfloat[$c=2,\Delta=0.1$]{
        \includegraphics[width=0.33\textwidth]{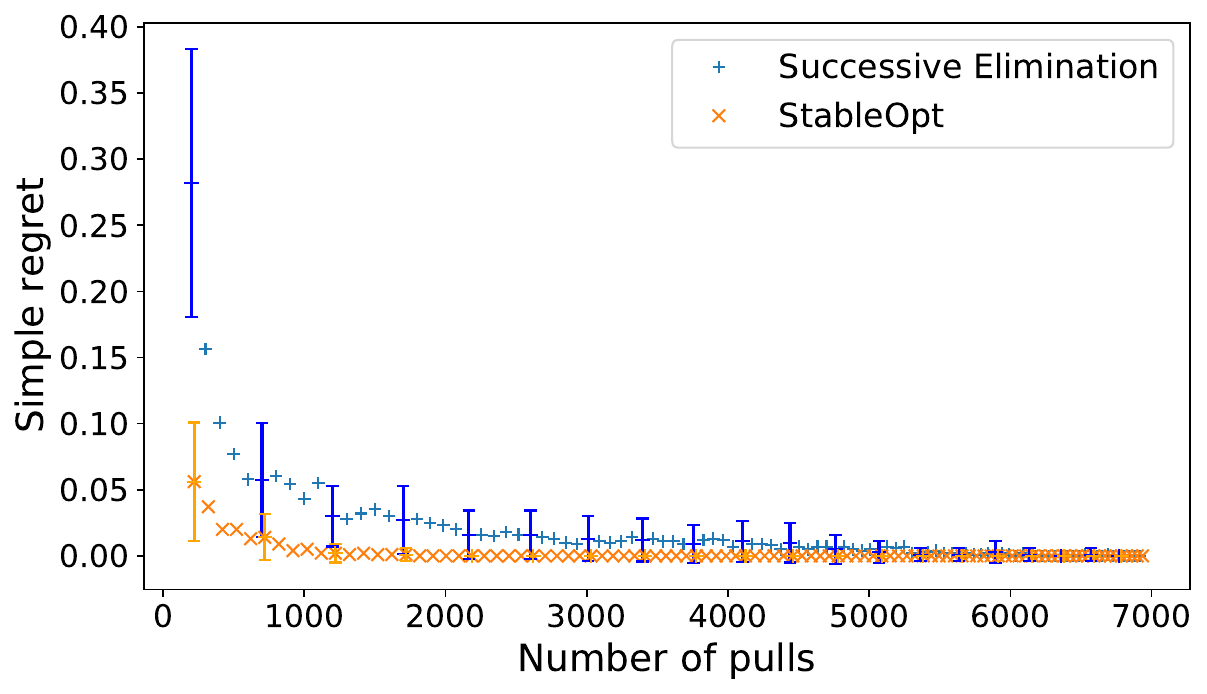}}
    \subfloat[$c=2,\Delta=0.2$]{
        \includegraphics[width=0.33\textwidth]{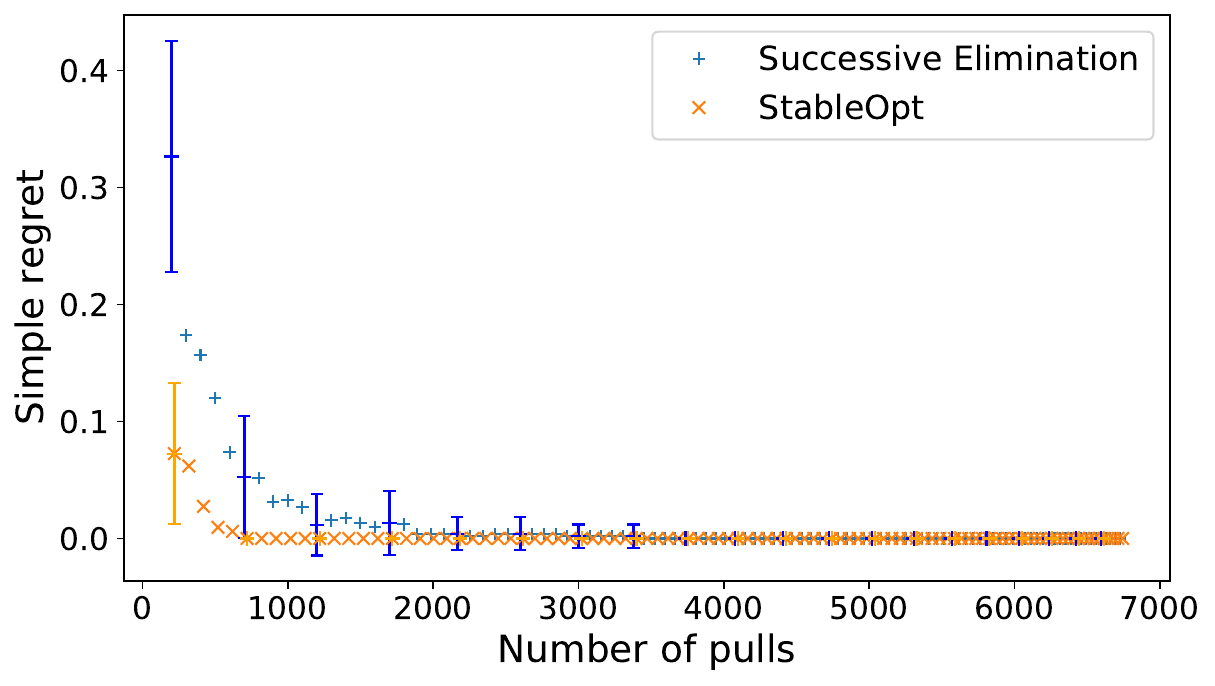}}
    \subfloat[$c=2,\Delta=0.4$]{
        \includegraphics[width=0.33\textwidth]{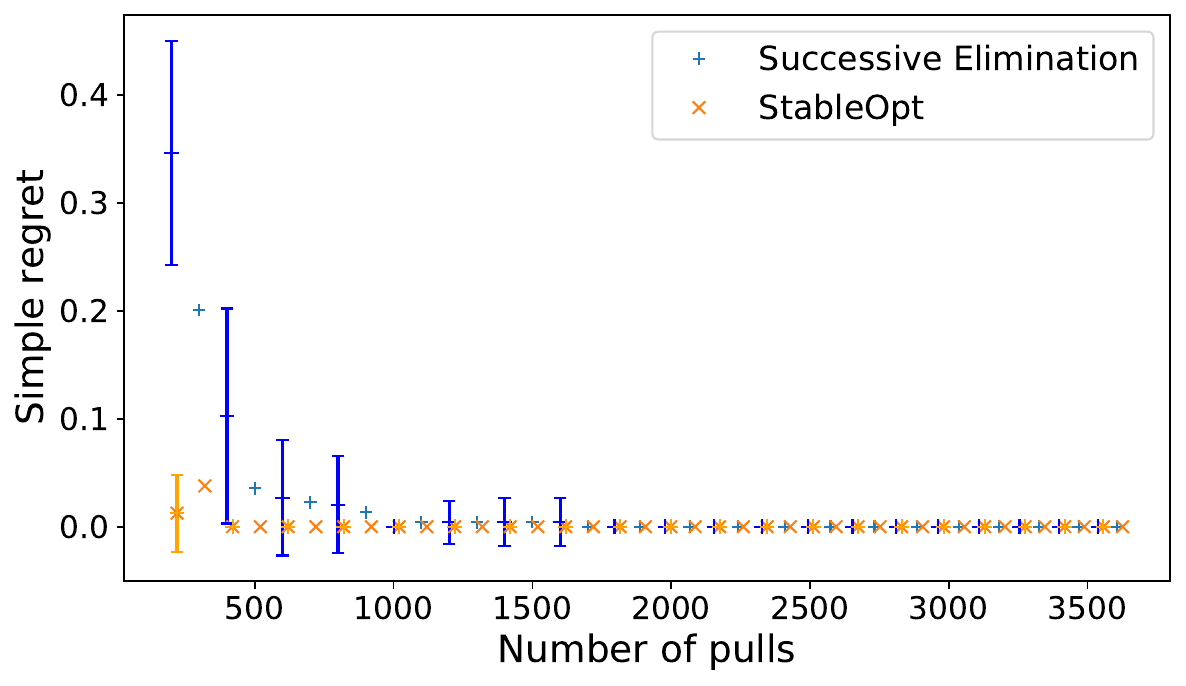}}
    \caption{Simple regret plots with $c=2$ and various gap values.} 
    \label{fig:regret_comp}
\end{figure*}

\begin{table}[t]
    \centering
    \caption{Empirical success rates for various gap values and $(c,\eta)$-value pairs}
    \label{table:model_acc}
    \resizebox{0.45\textwidth}{!}{
        \begin{tabular}{lccc}
            \hline
            \textbf{Model} & $\Delta = 0.1$ & $\Delta=0.2$ & $\Delta	=0.4$ \\ \hline 
            Elimination ($c=1$) & 1.0 & 1.0 & 1.0 \\
            {\scshape StableOpt} ($c=1, \eta=0.01$) & 0.98 & 0.99 & 1.0 \\
            {\scshape StableOpt} ($c=1, \eta=0.02$) & 0.91 & 1.0 & 1.0 \\
            \hline 
            Elimination ($c=1.25$) & 1.0 & 1.0 & 1.0 \\
            {\scshape StableOpt} ($c=1.25, \eta=0.01$) & 1.0 & 1.0 & 1.0 \\
            {\scshape StableOpt} ($c=1.25, \eta=0.02$) & 1.0 & 1.0 & 1.0 \\
            \hline 
            Elimination ($c=1.5$) & 1.0 & 1.0 & 1.0 \\
            {\scshape StableOpt} ($c=1.5, \eta=0.01$) & 1.0 & 1.0 & 1.0 \\
            {\scshape StableOpt} ($c=1.5, \eta=0.02$) & 1.0 & 1.0 & 1.0 \\
            \hline 
            Elimination ($c=2$) & 1.0 & 1.0 & 1.0 \\
            {\scshape StableOpt} ($c=2, \eta=0.01$) & 1.0 & 1.0 & 1.0 \\
            {\scshape StableOpt} ($c=2, \eta=0.02$) & 1.0 & 1.0 & 1.0 \\
            \hline
        \end{tabular}
    }
\end{table}

Here we explore the effect of varying $c$, the constant in the confidence width $\frac{c}{\sqrt{T_j(t)}}$ (previously set to one), and $\eta$, the confidence width beyond which {\scshape StableOpt} terminates (previously set to $0.01$).

In the top row of \refig{fig:pull_comp_app}, we see that increasing $c$ naturally increases the number of arm pulls for both algorithms (due to more conservative confidence bounds), but appears to impact {\scshape StableOpt} more.  However, the second row indicates that this is at least partly due to the stringent stopping condition, since the less stringent choice $\eta = 0.02$ brings the two algorithms back closer together.  

A caveat here is that increasing $\eta$ puts {\scshape StableOpt} at a higher risk of returning the wrong group; we investigate this in \reftbl{table:model_acc}.  For the most part, the algorithms return the correct group on all 100 trials, but {\scshape StableOpt} indeed starts to produce errors when both $c$ and $\eta$ are chosen too aggressively, particularly $c = 1$ and $\eta = 0.02$.

Finally, in \refig{fig:regret_comp}, we plot the simple regret with $c=2$, in contrast to $c=1$ used in \refig{fig:crit_comp}.  Again, increasing $c$ naturally increases the number of arm pulls for both algorithms, but we observe the same general behavior for both values of $c$.  In general, our findings suggest that {\scshape StableOpt} is highly effective in providing small simple regret, but that more care is needed (compared to Successive Elimination) in choosing the confidence bounds and stopping rule when the goal is exact best-group identification.

\medskip
\noindent {\bf Acknowledgment.} This work was supported by the Singapore National Research Foundation (NRF) under grant number R-252-000-A74-281.

    \bibliography{bibliography}

\begin{thebibliography}{24}
\providecommand{\natexlab}[1]{#1}

\bibitem[{Audibert, Bubeck, and Munos(2010)}]{audibert2010bestarmid}
Audibert, J.-Y.; Bubeck, S.; and Munos, R. 2010.
\newblock Best arm identification in multi-armed bandits.
\newblock In \emph{Conference on Learning Theory}, 41--53.

\bibitem[{Ban and He(2021)}]{ban2021local}
Ban, Y.; and He, J. 2021.
\newblock Local clustering in contextual multi-armed bandits.
\newblock {https://arxiv.org/abs/2103.00063}.

\bibitem[{Bertsimas, Nohadani, and Teo(2010)}]{Ber10}
Bertsimas, D.; Nohadani, O.; and Teo, K.~M. 2010.
\newblock Nonconvex robust optimization for problems with constraints.
\newblock \emph{INFORMS journal on Computing}, 22(1): 44--58.

\bibitem[{Bogunovic et~al.(2017)Bogunovic, Mitrovi{\'c}, Scarlett, and
  Cevher}]{Bog17}
Bogunovic, I.; Mitrovi{\'c}, S.; Scarlett, J.; and Cevher, V. 2017.
\newblock Robust submodular maximization: A non-uniform partitioning approach.
\newblock In \emph{International Conference on Machine Learning}.

\bibitem[{Bogunovic et~al.(2018)Bogunovic, Scarlett, Jegelka, and
  Cevher}]{bogunovic2018adversarially}
Bogunovic, I.; Scarlett, J.; Jegelka, S.; and Cevher, V. 2018.
\newblock Adversarially robust optimization with {G}aussian processes.
\newblock In \emph{Conference on Neural Information Processing Systems}.

\bibitem[{Bouneffouf et~al.(2019)Bouneffouf, Parthasarathy, Samulowitz, and
  Wistub}]{bouneffouf2019optimal}
Bouneffouf, D.; Parthasarathy, S.; Samulowitz, H.; and Wistub, M. 2019.
\newblock Optimal exploitation of clustering and history information in
  multi-armed bandit.
\newblock {https://arxiv.org/abs/1906.03979}.

\bibitem[{Bubeck, Wang, and Viswanathan(2013)}]{bubeck2013multipleident}
Bubeck, S.; Wang, T.; and Viswanathan, N. 2013.
\newblock Multiple identifications in multi-armed bandits.
\newblock In \emph{International Conference on Machine Learning}.

\bibitem[{Chen et~al.(2017)Chen, Lucier, Singer, and
  Syrgkanis}]{chen2017robust}
Chen, R.; Lucier, B.; Singer, Y.; and Syrgkanis, V. 2017.
\newblock Robust optimization for non-convex objectives.
\newblock {https://arxiv.org/abs/1707.01047}.

\bibitem[{Gabillon et~al.(2012)Gabillon, Victor, Ghavamzadeh, Mohammad, and
  Lazaric}]{gabillon2012unified}
Gabillon; Victor; Ghavamzadeh; Mohammad; and Lazaric, A. 2012.
\newblock {Best arm identification: A unified approach to fixed budget and
  fixed confidence}.
\newblock In \emph{Conference on Neural Information Processing Systems}.

\bibitem[{Gabillon et~al.(2011)Gabillon, Ghavamzadeh, Lazaric, and
  Bubeck}]{gabillon2011MultiBandit}
Gabillon, V.; Ghavamzadeh, M.; Lazaric, A.; and Bubeck, S. 2011.
\newblock Multi-bandit best arm identification.
\newblock In \emph{Conference on Neural Information Processing Systems},
  volume~24.

\bibitem[{Garivier and Kaufmann(2016)}]{garivier2016optimal}
Garivier, A.; and Kaufmann, E. 2016.
\newblock Optimal best arm identification with fixed confidence.
\newblock In \emph{Conference on Learning Theory}, 998--1027.

\bibitem[{Jamieson and Nowak(2014)}]{jamieson2014best}
Jamieson, K.; and Nowak, R. 2014.
\newblock Best-arm identification algorithms for multi-armed bandits in the
  fixed confidence setting.
\newblock In \emph{Conference on Information Sciences and Systems}, 1--6.

\bibitem[{Jedor, Perchet, and Louedec(2019)}]{jedor2020categorized}
Jedor, M.; Perchet, V.; and Louedec, J. 2019.
\newblock Categorized Bandits.
\newblock In \emph{Conference on Neural Information Processing Systems}.

\bibitem[{Kalyanakrishnan et~al.(2012)Kalyanakrishnan, Tewari, Auer, and
  Stone}]{kalyanakrishnan2012pac}
Kalyanakrishnan, S.; Tewari, A.; Auer, P.; and Stone, P. 2012.
\newblock PAC subset selection in stochastic multi-armed bandits.
\newblock In \emph{International Conference on Machine Learning}, 227–234.

\bibitem[{Katz-Samuels and Jamieson(2020)}]{Kat20}
Katz-Samuels, J.; and Jamieson, K. 2020.
\newblock The true sample complexity of identifying good arms.
\newblock In \emph{International Conference on Artificial Intelligence and
  Statistics}.

\bibitem[{Kaufmann, Capp{\'e}, and Garivier(2016)}]{kaufmann2016complexity}
Kaufmann, E.; Capp{\'e}, O.; and Garivier, A. 2016.
\newblock On the complexity of best-arm identification in multi-armed bandit
  models.
\newblock \emph{The Journal of Machine Learning Research}, 17(1): 1--42.

\bibitem[{Kaufmann and Kalyanakrishnan(2013)}]{pmlr-v30-Kaufmann13}
Kaufmann, E.; and Kalyanakrishnan, S. 2013.
\newblock Information Complexity in Bandit Subset Selection.
\newblock In \emph{Conference on Learning Theory}, volume~30, 228--251. PMLR.

\bibitem[{Krause et~al.(2008)Krause, McMahan, Guestrin, and
  Gupta}]{krause2008robust}
Krause, A.; McMahan, H.~B.; Guestrin, C.; and Gupta, A. 2008.
\newblock Robust Submodular Observation Selection.
\newblock \emph{Journal of Machine Learning Research}, 9(12).

\bibitem[{Lattimore and Szepesv\'ari(2020)}]{Csa18}
Lattimore, T.; and Szepesv\'ari, C. 2020.
\newblock \emph{Bandit Algorithms}.
\newblock Cambridge University Press.

\bibitem[{Mannor and Tsitsiklis(2004)}]{Man04}
Mannor, S.; and Tsitsiklis, J.~N. 2004.
\newblock The sample complexity of exploration in the multi-armed bandit
  problem.
\newblock \emph{Journal of Machine Learning Research}, 5(Jun): 623--648.

\bibitem[{Orlin, Schulz, and Udwani(2018)}]{orlin2018robust}
Orlin, J.~B.; Schulz, A.~S.; and Udwani, R. 2018.
\newblock Robust monotone submodular function maximization.
\newblock \emph{Mathematical Programming}, 172(1): 505--537.

\bibitem[{Scarlett, Bogunovic, and Cevher(2019)}]{scarlett2019overlapping}
Scarlett, J.; Bogunovic, I.; and Cevher, V. 2019.
\newblock Overlapping multi-bandit best arm identification.
\newblock In \emph{International Symposium on Information Theory}, 2544--2548.
  IEEE.

\bibitem[{Singh et~al.(2020)Singh, Liu, Sun, and Shroff}]{singh2020multiarmed}
Singh, R.; Liu, F.; Sun, Y.; and Shroff, N. 2020.
\newblock Multi-armed bandits with dependent arms.
\newblock {https://arxiv.org/abs/2010.09478}.

\bibitem[{Yang and Ren(2021)}]{yang2021robust}
Yang, J.; and Ren, S. 2021.
\newblock Robust Bandit Learning with Imperfect Context.
\newblock Https://arxiv.org/abs/2102.05018.

\end{thebibliography}

\end{document}